\theoremstyle{plain}
\newtheorem{theorem}{Theorem}[section]
\newtheorem{proposition}[theorem]{Proposition}
\newtheorem{lemma}[theorem]{Lemma}
\theoremstyle{definition}
\theoremstyle{remark}
\def\ie{{\em i.e.},\ }
\def\eg{{\em e.g.},\ }
\def\Surv#1#2{S(\,#1 \,\mid\, #2\,)}
\def\SurvSub#1#2#3{S_{\text{#1}}(\,#2\mid #3\,)}
\def\SurvTrue#1#2{\SurvSub{true}{#1}{#2} }
\def\SurvKM#1#2{S_{\text{KM}(#2)}(\, #1 \,)}
\def\Bfx#1{\boldsymbol{x}_{#1}}
\def\E{\mathbb{E}}
\def\Data{\mathcal{D}}
\newcommand{\cmark}{\ding{51}}%
\newcommand{\xmark}{\ding{55}}%
\def\Rmsr#1#2#3#4{
 \mathcal{R}_{\text{#1}}(\,#2,\,#3,\,#4\,) }
\icmltitlerunning{An Effective Meaningful Way to Evaluate Survival Models}
\begin{document}

\twocolumn[
\icmltitle{An Effective Meaningful Way to Evaluate Survival Models}




\begin{icmlauthorlist}
\icmlauthor{Shi-ang Qi}{uacs}
\icmlauthor{Neeraj Kumar}{amii}
\icmlauthor{Mahtab Farrokh}{uacs}
\icmlauthor{Weijie Sun}{uacs}
\icmlauthor{Li-Hao Kuan}{uacs} \\
\icmlauthor{Rajesh Ranganath}{nyu}
\icmlauthor{Ricardo Henao}{duke}
\icmlauthor{Russell Greiner}{uacs,amii}
\end{icmlauthorlist}

\icmlaffiliation{uacs}{Computing Science, University of Alberta, Edmonton, Canada}
\icmlaffiliation{amii}{Alberta Machine Intelligence Institute, Edmonton, Canada}
\icmlaffiliation{nyu}{Computer Science \& Center for Data Science, New York University, New York City, USA}
\icmlaffiliation{duke}{Biostatistics \& Bioinformatics, Duke University, Durham, USA}

\icmlcorrespondingauthor{Shi-ang Qi}{shiang@ualberta.ca}
\icmlcorrespondingauthor{Russell Greiner}{rgreiner@ualberta.ca}

\icmlkeywords{Machine Learning, ICML}

\vskip 0.3in
]



\printAffiliationsAndNotice{}  

\begin{abstract}
One straightforward metric to evaluate a survival prediction model is based on the Mean Absolute Error (MAE) -- the average of the absolute difference between the time predicted by the model and the true event time, over all subjects. Unfortunately, this is challenging because, in practice, the test set includes (right) censored individuals, meaning we do not know when a censored individual actually experienced the event. In this paper, we explore various \change[R2]{approaches}{metrics} to estimate MAE for survival datasets that include (many) censored individuals. 
Moreover, we introduce a novel and effective approach for generating realistic semi-synthetic survival datasets to facilitate the evaluation of metrics. 
Our findings, based on the analysis of the semi-synthetic datasets, reveal that our proposed metric (MAE using pseudo-observations) is able to rank models accurately based on their performance, and often closely matches the true MAE -- in particular, is better than several alternative methods.
\end{abstract}

\section{Introduction}
Survival prediction models are often used to predict how long an individual will survive -- or in general, the time until an individual experiences a specific event. These have many applications in medicine (time to death, relapse, or recovery), business (time to service cancellation), and social sciences (war or peace duration).
Unlike typical regression problems, one challenge of training and evaluating a survival prediction model is that survival datasets often contain censored observations~\cite{klein2003survival}.
This paper focuses on 
the most prevalent type of censorship, right-censoring, which provides only a lower bound on event time.
For example, consider a patient who entered a 5-year study at its beginning and was still alive at the end of the study.
We only know this patient survived for at least 5 years, but do not know whether that patient lived a day, a month, or 20 years after the study ended.

Numerous statistical and machine learning models have been developed to estimate survival outcomes from input features. In this paper, we focus on a class of survival models that learns to compute an individual's survival distribution (ISD)~\cite{haider2020effective}: a probability curve for all future time points for a specific patient.
Note that one can use an individual's ISD to 
($i$)~compute that individual's expected time-to-event,
($ii$)~provide single-time estimations (\eg 5-year cancer onset probability, like the Gail model~\cite{costantino1999validation}), or
($iii$)~estimate a risk score 
(like Cox Proportional Hazard model~\cite{cox1972regression}).
Obviously, the computation of aforementioned quantities from ISDs will only be reliable if a model's predicted ISD is  ``accurate''.
One question that plagues the survival prediction community is: 
{\em What is an appropriate scoring rule for evaluating survival models?} 
The answer may vary from task to task -- \eg the concordance index (C-index)~\cite{harrell1996multivariable} is useful in several clinical problems that require comparing patients, such as prioritizing patients for liver transplants (a patient at the highest risk of death should be treated first). 
Figure~\ref{fig:6_metrics} shows a visualization of six typical evaluation metrics (see discussion in Section~\ref{sec:mae_censor}).

\begin{figure*}[ht]
    \centering
    \includegraphics[width=\textwidth]{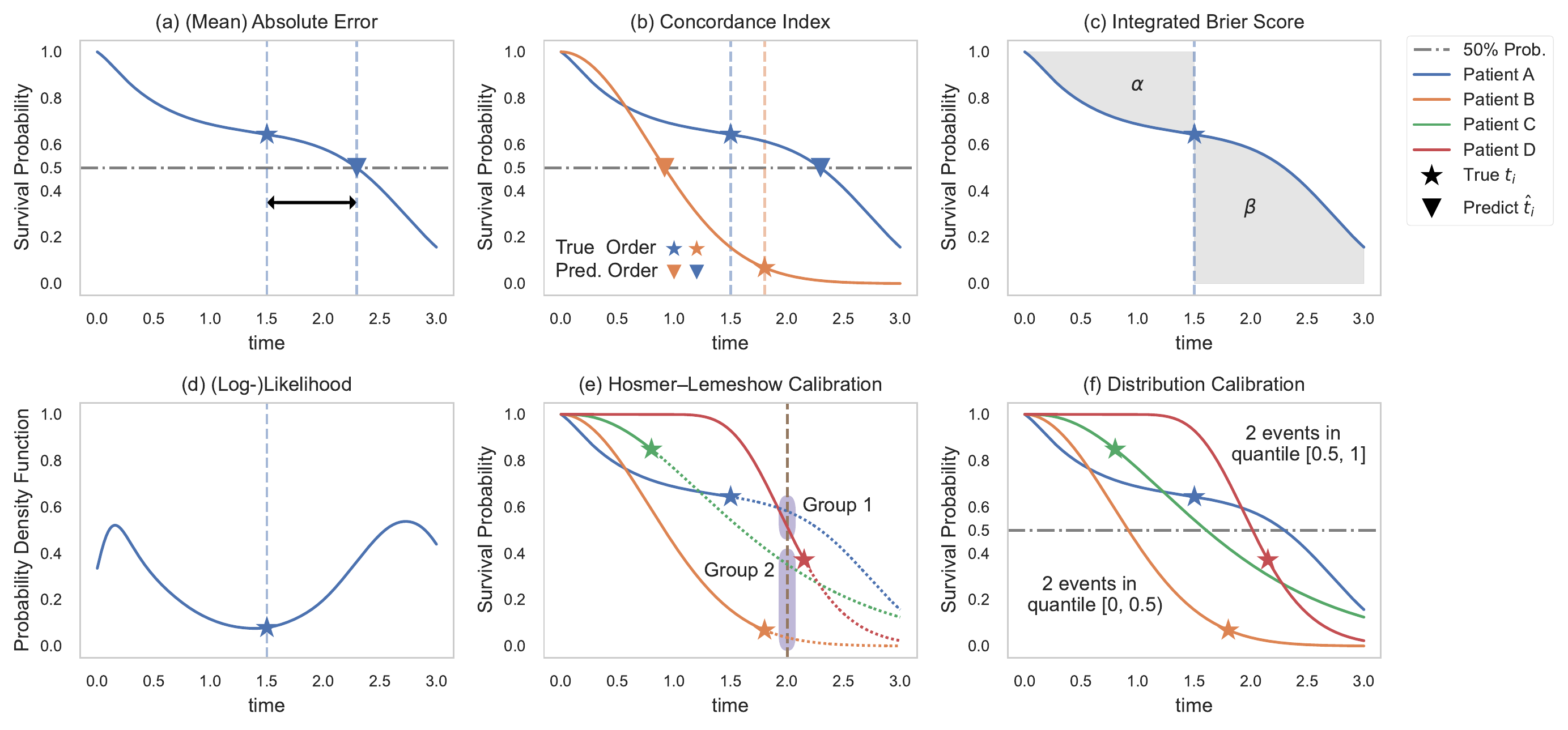}
    \vspace*{-0.3in}
    \caption{Illustration of six common evaluation metrics using four uncensored subjects. Note that the $y$-axis of (d), Log-Likelihood, is the probability density function, while the others are survival functions. Here we use the median survival time of the survival function as the predicted time for a more intuitive visualization. 
    \add[R2]{Please refer to Appendix~\hbox{\ref{sec:six_metrics}} for the detailed discussion of these metrics.}}
    \label{fig:6_metrics}
\end{figure*}

However, the Mean Absolute Error (MAE) seems to be the most intuitive metric for evaluating survival prediction models, as it measures the expected difference between predicted and actual event times.
While this difference is trivial to compute for uncensored individuals,
it is problematic for censored individuals. 
Thus, researchers have proposed several versions of MAE to handle censored individuals, such as MAE-uncensored and MAE-hinge~\cite{haider2020effective}.
\add[]{However}, these approaches often produce biased MAEs for high censoring. 


Here, we propose an MAE-inspired evaluation metric using the pseudo-observation de-censoring technique~\cite{andersen2003generalised}, MAE-PO (defined \change[R2]{below}{in Section~\hbox{\ref{sec:mae-po}}}). 
Our metric effectively deals with a censored subject by ($a$) estimating its pseudo-observation value for the censored subject by calculating how much it counts toward the group-level Kaplan-Meier (KM) estimator~\cite{kaplan1958nonparametric}, and ($b$) employing a weighted scheme to represent the confidence of pseudo-observation estimation.
Our main contributions are:
\begin{enumerate}
\parskip0em
\itemsep0em
    \item We show that MAE is fundamentally different from the other standard evaluation measures (Section~\ref{sec:mae_censor} and Appendix~\ref{sec:six_metrics});
    \item We theoretically prove some relevant properties of pseudo-observation, which helps justify the MAE pseudo-observation approach for evaluating the time-to-event prediction (Section~\ref{sec:mae-po} \add[R2]{and Appendix~\hbox{\ref{appendix:pseudo-obs_property}}});
    \item We provide a way to produce semi-synthetic (but realistic) survival datasets, where we know the true survival time of all instances (Section~\ref{sec:results}), \add[R2]{which is essential for evaluating evaluation methods};
    \item We compare six versions of MAE-inspired metrics (on various realistic semi-synthesized datasets) to determine which metric
    ($i$)~ranks the ISD models in a way that is close to the true MAE, or ($ii$)~reports the error closest to the true MAE (Section~\ref{sec:results});
    \item We show that, in many situations, MAE-PO is the most suitable estimator. We also provide a code base for these MAE approaches, for this and other variants.
\end{enumerate}
We believe 
this is the first methodology for identifying the appropriate evaluation metrics using meaningful semi-synthetic datasets. \remove[]{We will release the semi-synthetic datasets and source code for reproducibility and to support future research, and will also publish a {\tt pip} package to facilitate general survival prediction evaluation for the community.}\note[]{Already add code link in experimental section}

\section{Preliminaries}
\label{sec:notation}
In general, a survival dataset contains \textit{N} time-to-event tuples, $\mathcal{D}=\{(\Bfx{i}, t_i, \delta_i)\}_{i=1}^N$, 
where $\Bfx{i} \in \mathbb{R}^d$ represents the observed $d$-dimensional features for the $i$-th subject,
$t_i \in \mathbb{R}_+$ 
denotes the event or censor time, 
and $\delta_i \in \{0, 1\}$ is a censor/event indicator where $\delta_i = 0$ means the subject is right-censored (subject has not experienced an event at time $t_i$)
and $\delta_i = 1$ means subject died at time $t_i$. 
Conceptually, we assume subject $i$ has an event time $e_i$ and a censoring time $c_i$, and assign 
\change[]{$t_i \leftarrow \min\{e_i, c_i\}$}{$t_i \triangleq \min\{e_i, c_i\}$} and 
\change[]{$\delta_i \leftarrow  \mathbbm{1}[e_i \leq c_i]$}{$\delta_i \triangleq  \mathbbm{1}[e_i \leq c_i]$}. 
We assume \emph{independent censoring}:
$e_i$ and $c_i$ are assumed independent, conditional on the covariates $\Bfx{i}$.

ISD models target the individual survival distribution $\Surv{t}{\Bfx{i}} = P(T > t \mid \mathbf{X} = \Bfx{i})$. 
Further, $F(t \mid \Bfx{i}) = 1 - \Surv{t}{\Bfx{i}}$ denotes the (conditional) cumulative density function and $f(t \mid \Bfx{i}) = \frac{ - \partial \Surv{t}{\Bfx{i}}}{\partial t}$ denotes the (conditional) probability density function (PDF) of the event time $T$. 
A predicted \change[]{failure}{event} time $\hat{t}_i$ can then be represented by either mean (expected) or median survival time, respectively: 
\begin{align}
    \hat{t}_{i, \text{mean}}\ & =\ \E_t[\,\Surv{t}{\Bfx{i}}\,]\ =\ \int_0^\infty \Surv{t}{\Bfx{i}}\, dt \ , \label{eq:mean_survival_time} \\
    \hat{t}_{i, \text{median}}\ & =\ \text{median}(\Surv{t}{\Bfx{i}}) = S^{-1} (\,\tau = 0.5 \mid \Bfx{i} \,) \ , \label{eq:median_survival_time}
\end{align}
%
where $\tau \in [0, 1]$ represents the quantile probability level.
If necessary, a linear extrapolation (extrapolate from the initial to the last time point of the ISD curve) might be applied to ISDs that do not reach 0\% or 50\% quantile probability.

Suppose we have two models, each producing a predicted ISD curve for every individual 
$\{\SurvSub{M1}{t}{\Bfx{i}}\}_{i=1}^N$ and 
$\{\SurvSub{M2}{t}{\Bfx{i}}\}_{i=1}^N$.
Our goal is to find an MAE-inspired metric $\mathcal{R}(\cdot)$ that, given an ISD distribution
$\SurvSub{m}{t}{\Bfx{i}}$, 
the observed time $t_i$ and event indicator $\delta_i$ of a subject, 
returns a good approximation to the true MAE%
\footnote{Here, based on the true event time $e_i$,
which of course is not given for censored instances.
}.
We can then consider the average score, over the dataset, $\mathbb{E}[ \, \mathcal{R}( \cdot) \,]$.
Our goal is a measure whose average is as close as possible to the true MAE.
To be specific, we hope that 
$\mathbb{E}[\, \mathcal{R}(\,\SurvSub{m}{t}{\Bfx{i}},\, t_i, \,\delta_i\,) \, ]$, 
is close to the true MAE score for each model,
or at least, can correctly rank the two models.


\section{Handling Right-Censoring in MAE}
\label{sec:mae_censor}
Survival prediction is like regression as it predicts a real number (the subject's time of the event) from a description of that subject, $\Bfx{i}$. Given this commonality, we want to evaluate survival prediction models using measures for evaluating regression tasks, such as mean absolute error (MAE).
As suggested in Figure~\ref{fig:6_metrics}(a),
the absolute error for an uncensored subject is the absolute difference between the true event time and the predicted time:
\begin{equation}
\label{eq:MAE_uncensor}
    \Rmsr{MAE}{\hat{t}_i}{t_i}{\delta_i = 1} 
    \ =\ |t_i - \hat{t}_i| ,
\end{equation}
where $\hat{t}_i$ is the median
survival time\footnote{Alternatively, we could use the mean survival time for $\hat{t}_i$ from Equation~\ref{eq:mean_survival_time}.} from Equation~\ref{eq:median_survival_time}. 
MAE is formally a {\em negative scoring rule}, 
as more precise models have smaller values.

It is vital to have a thorough grasp of the limitations of all types of evaluation metrics when selecting metrics for model optimization, and separately for model evaluation. 
In this section, we briefly motivate that the MAE score is the most appropriate metric if the objective is to quantify the time-to-event accuracy.
Figure~\ref{fig:6_metrics} shows that, C-index measures the ranking accuracy by assessing if the order of true event times (stars) is concordant with the order of predicted event times (triangles) (b); 
integrated Brier score~\cite{graf1999assessment} measures the accuracy of the predicted probabilities over all times via the weighted squared error of the shaded regions (c);
log-likelihood measures the \add[R2]{magnitude of the} predicted probability at event times \add[R2]{(d);}
Hosmer-Lemeshow calibration~\cite{hosmer1980goodness} \add[R2]{assesses if the expected and observed event rates are statistically similar (e);}
and Distribution calibration (D-calibration)~\cite{haider2020effective} \add[R2]{examines if the proportion of subjects who dies in each quantile interval is uniformly distributed (f)}. 
Appendix~\ref{sec:six_metrics} shows that the time-to-event precision that MAE captures cannot be covered by other metrics.
Moreover, the model preference between MAE and each other metrics can be completely different
-- \ie a model can have perfect C-index but terrible MAE, while another model is good at MAE but has poor C-index score.

However, evaluating (and also learning) survival prediction models is challenging 
when 
the dataset includes 
right-censored subjects,
meaning 
it is critical to define 
learning (and evaluation)
algorithms that can appropriately incorporate the censored subjects.
This section begins by reviewing six MAE variants that claim to handle censored subjects, including three novel MAE variants. 
Section~\ref{sec:results} then presents an empirical comparison of all these versions.

Unless otherwise specified, we will use the median survival time (Equation~\ref{eq:median_survival_time}) as the default method to calculate the predicted time of the ISD curves. 
This is because 
($i$) the combination of MAE and median survival time is a proper scoring rule (Theorem~\ref{them:MAE_median}); and
($ii$) linear extrapolation is only required for curves that do not reach 50\% for median survival time, whereas it is required for every curve that does not reach 0\% for mean survival time -- which is extremely common.

\subsection{MAE-Uncensored}
The simplest solution is to exclude all censored subjects from the evaluation, then use Equation~\ref{eq:MAE_uncensor} to calculate the absolute error for each uncensored patient and take the average over the uncensored instances. 
The (marginal) distribution of censored and event subjects can vary substantially, making this strategy susceptible to bias.
Moreover, when the censoring rate is high, a sizeable portion of the data will be completely ignored by the performance metric.

\subsection{MAE-Hinge}
Another way to incorporate censoring is to use the hinge loss -- a one-sided metric that considers only if the predicted time is earlier than the censored time. 
For a censored subject, the MAE-hinge score is:
\begin{equation*}
\label{eq:MAE_hinge}
\Rmsr{MAE-hinge}{\hat{t}_i}{t_i}{\delta_i = 0}
\ =\ \max \{\,t_i - \hat{t}_i,\, 0\,\} \ .
\end{equation*}
The MAE-hinge is an optimistic evaluation of the true MAE for two reasons: 
($i$)~it assigns a score of 0 if the censoring happens before the predicted survival time; and 
($ii$)~it assigns a loss of $c_i - \hat{t}_i$ if the censoring time occurs after the prediction.
Both are lower or equal to the true prediction error.
Therefore, for a dataset with an extremely high censoring rate, a model can actually obtain an extremely low MAE-hinge by overestimating the event time for all subjects,
as 
MAE-hinge will give zero scores for censored subjects, resulting in an optimistic overall score\footnote{D-calibration can be used in conjunction with MAE-hinge to prevent this type of situation~\cite{qi2022personalized},
as overestimating the event time will result in a skewed proportion for large probability intervals (which means not D-calibrated).}.

\subsection{MAE-Margin}
MAE-margin~\cite{haider2020effective} assigns a ``best guess'' value (margin time) to each censored subject using the non-parametric population KM~\cite{kaplan1958nonparametric} estimator.
This margin time can be interpreted as a conditional expectation of 
the event time given the 
event time is greater than the censoring time. Given a subject censored at time $t_i$, we can calculate its margin time by:
\begin{equation*}
    e_{\text{margin}}(t_i, \Data) = \mathbb{E}_t[e_i \mid e_i > t_i] = t_i + \frac{\int_{t_i}^\infty \SurvKM{t}{\Data} dt}{\SurvKM{t}{\Data}} \ ,
\end{equation*}
where $\SurvKM{t}{\Data}$ is the KM estimation that is typically derived from the training dataset. 

Based on the censored time, the margin time can be more trustworthy for some circumstances than for others. 
For instance, we know effectively nothing about a patient censored at time 0 -- hence we should have very little confidence that the margin time matches its actual event time. 
In contrast, assume that no patients in the training data have ever lived longer than 130 years old.
If a patient was censored at 100 years old (\ie close to the longest known lifespan), we are quite certain that his/her margin time is close to the observed event time.
Therefore, for each censored subject, \citet{haider2020effective} suggested we use a confidence weight $\omega_i = 1 - S_{\text{KM}(\Data)}(t_i)$ for the error calculated based on the margin value. 
This weight $\omega_i$ yields lower confidence for early censoring subjects and higher confidence for late censoring data. 
Of course, we set the weights for uncensored subjects $i$ to $w_i = 1$ as we have full confidence in those error calculations.
The overall MAE-margin after a re-weighting scheme is:
\begin{align}
    & \E_{i \sim \mathcal{D}} [\mathcal{R}_{\text{MAE-margin}} (\hat{t}_i, t_i, \delta_i)] = \label{eq:MAE_margin} \\
    &  \frac{1}{\sum_{i=1}^{N} \omega_i} \sum_{i =1}^{N} \omega_i  \left| [(1 - \delta_i) \cdot e_{\text{margin}}(t_i) + \delta_i \cdot t_i ] - \hat{t}_i \right| \ . \notag
\end{align}

\subsection{MAE-IPCW-D}
\label{sec:ipcw-d}
Inverse Probability Censoring Weight (IPCW) was originally designed for handling censored subjects in the calculation of Brier score (BS)~\cite{graf1999assessment}. The method uniformly transfers a censored subject's weights to subjects with known status at that time~\cite{vock2016adapting}. 
{In its simplest form, IPCW requires completely independent censoring; 
IPCW can also be extended to independent censoring conditional on covariates to even estimate survival models}~\cite{han2021inverse}.

Inspired by 
{the simple form} of IPCW, we can design an MAE-based evaluation method by uniformly transferring the weight of a censored subject to the uncensored subjects with later event times~\cite{graf1999assessment}.
Similar to how the prediction error of a censored subject can be approximated using the deterministic errors of subsequent uncensored subjects in the IPCW 
Brier Score, 
we can formulate 
this new MAE-based evaluation as: 
\begin{equation}
\label{eq:MAE_ipcw_diff}
    \E_{i \sim \mathcal{D}} [\mathcal{R}_{\text{MAE-IPCW-D}} (\hat{t}_i, t_i, \delta_i)] = \frac{1}{N} \sum_{i=1}^N \frac{| t_i - \hat{t}_i | \cdot \mathbbm{1}_{\delta_i = 1}}{G(t_i)} \ ,
\end{equation}
where $G(t_i)$ is the probability of not being censored at the event time. 
We call this method ``MAE-IPCW-D'' (where D stands for difference) since the IPCW reweighing is essentially an approximation of the difference between the expected and observed times.

\subsection{MAE-IPCW-T}
One problem with MAE-IPCW-D is that Equation~\ref{eq:MAE_ipcw_diff} considers only the predicted time for the uncensored subjects, but does not consider the predictions for the censored subjects.
For example, imagine two models, M1 and M2, have identical predictions for every subject, except for one censored subject who is censored at time 100.
M1 predicts this subject will die at 5, but M2 predicts it at 90. 
Notice M1 is off by at least 95 here, and M2 by at least 10; indeed, we know that M1's error for this patient is 85 worse than M2's. However, MAE-IPCW-D gives both models the same score. 

To avoid this, the MAE-IPCW-T (where T stands for time) metric instead produces an estimated surrogate time of the event for each censored subject, as the average over the times of all subsequent uncensored subjects:
\begin{equation}
\label{eq:MAE_ipcw_time}
    e_{\text{IPCW}}(t_i, \Data)\ =\ \frac{\sum_{j = 1}^N \mathbbm{1}_{t_i < t_j} \cdot \mathbbm{1}_{\delta_j = 1} \cdot t_j}{\sum_{j = 1}^N \mathbbm{1}_{t_i < t_j} \cdot \mathbbm{1}_{\delta_j = 1}} \ .
\end{equation}
After calculating this IPCW-weighted time using Equation~\ref{eq:MAE_ipcw_time}, the MAE-IPCW-T method then uses Equation~\ref{eq:MAE_margin} (but with $e_{\text{IPCW}}$ rather than $e_{\text{margin}}$) to compute MAE scores.

Importantly, the IPCW-T approach has downsides as well. 
IPCW weighted time is incapable of approximating the value for censored subjects with no subsequent event times. 
The same problem applies to IPCW-D and IPCW Brier score as well~\cite{graf1999assessment}, where the denominator $G(t_i)$ in Equation~\ref{eq:MAE_ipcw_diff} will equal to zero for those censored-at-last subjects.
These subjects must be excluded from the evaluation. 
This is consistent with Administrative Brier Score~\cite{kvamme2019brier}, in which individuals are removed from evaluation after their administrative censoring time.

\subsection{MAE-PO}
\label{sec:mae-po}
Both MAE-margin and MAE-IPCW-T use surrogate event values for the censored subjects; of course, this is only useful if those surrogate values are accurate.

Another way to estimate surrogate event values is using the pseudo-observations~\cite{andersen2003generalised, andersen2010pseudo}. 
Let $\{t_i\}_{i=i}^N$ be i.i.d. draws of a random variable time, $T$, and let $\hat{\theta}$ be an unbiased estimator for the event time based on right-censored observations of $T$. 
The pseudo-observation for a censored subject is defined as: 
\begin{equation}
\label{eq:PO}
    e_{\text{pseudo-obs}}(t_i, \Data) = N \times \hat{\theta} - (N-1) \times \hat{\theta}^{-i} \ ,
\end{equation}
where $\hat{\theta}^{-i}$ is the estimator applied to the $N-1$ element dataset formed by removing that $i$-th instance. 
The pseudo-observation can be viewed as the contribution of subject $i$ to the unbiased event time estimation $\hat{\theta}$.
Here we can use, the mean survival time of the KM estimator, 
$\hat{\theta} = \E_t [ \SurvKM{t}{\Data}] $ and $\hat{\theta}^{-i} = \E_t [ \SurvKM{t}{\Data^{-i}}] $ 
as unbiased estimators.
After calculating the pseudo-observation values using Equation~\ref{eq:PO}, MAE-pseudo-observation (MAE-PO) then uses the re-weighting scheme in Equation~\ref{eq:MAE_margin} to produce the overall score.

Pseudo-observation values can be treated as though they are i.i.d. (Appendix~\ref{appendix:pseudo-obs_iid}). 
\citet{graw2009pseudo} has shown that, as $N \rightarrow \infty$, the pseudo-observation can approximate the correct conditional expectation:
\begin{equation*}
    \E [\,e_{\text{pseudo-obs}}(t_i) \mid \Bfx{i}\,]
    \quad\approx\quad \E [\,e_i \mid \Bfx{i}\,] \ ,
\end{equation*}
in situations where censoring does not depend on the covariates. 
However, when comparing the empirical performance of MAE-PO, we find it also works well in the situation where censoring is dependent on the covariates, which is consistent with~\citet{binder2014pseudo}.

To be a meaningful estimate, we need to consider that these pseudo-observation values have some important properties -- in particular, the pseudo-observation value for a censored subject is always greater than the censored time (see Theorem~\ref{theorem:authenticity}).
Appendix~\ref{appendix:pseudo-obs_property} provides more details about MAE-PO and its properties.

\begin{figure*}[t]
    \centering\includegraphics[width=\textwidth]{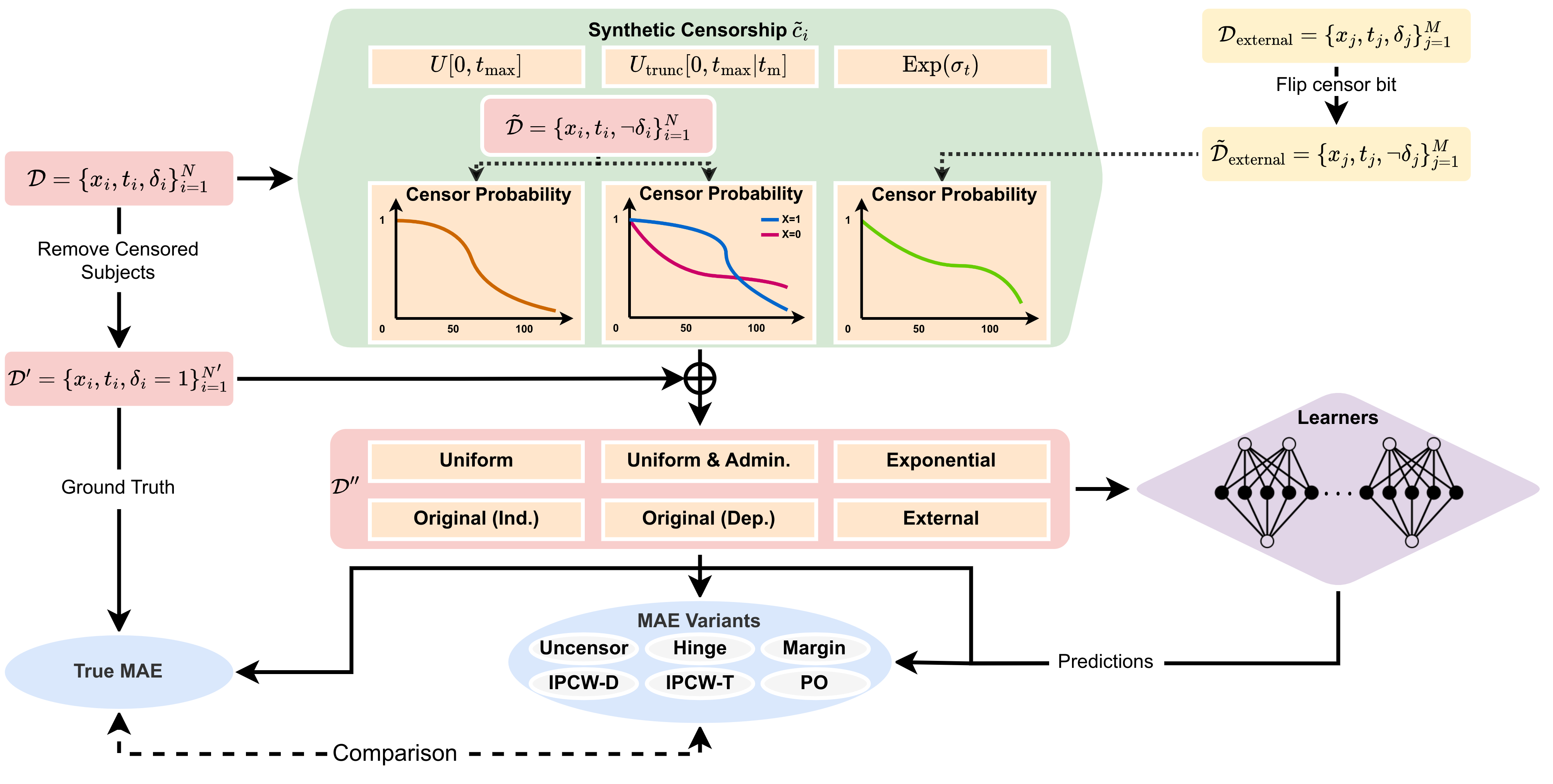}
    \vspace*{-0.2in}
    \caption{Flowchart illustrating the generation of realistic semi-synthetic survival datasets and the subsequent evaluation of MAE metrics.}
    \label{fig:flowchart}
\end{figure*}

\section{Experiments and Results}
\label{sec:results}

We conducted extensive experiments\footnote{Code to replicate all experiments can be found at \hbox{\url{https://github.com/shi-ang/CensoredMAE}}} to evaluate the effectiveness of the proposed evaluation methods -- comparing the effectiveness of these 6 evaluation metrics for estimating the actual MAE of various survival models on a wide range of survival datasets.
The two primary research objectives we care about are:
\begin{itemize}
\itemsep0em
    \item Can the metric accurately \textbf{rank} the performance of models, \ie can it identify the better-performing models?
    \item Does the performance score generated by the MAE variants closely \textbf{approximate} the true MAE?
\end{itemize}
\add[]{These two questions encompass the discrimination and calibration aspects of the metric performance, respectively.}
Since it is not possible to obtain a true MAE evaluation for any existing survival dataset, we will construct semi-synthetic datasets using real datasets with synthetic censorship. 
Of course, the algorithms for learning the survival models will only see the censored time for those synthetic-censored subjects; we will only use their true event time when we calculate the true MAE.

\subsection{Semi-Synthetic Datasets}
\label{sec:semi-synthetic_data}

To evaluate the MAE-inspired evaluation metrics, we need to know the true MAE, which means explicitly knowing when each subject will experience the event. 
As this information is not available in a real-world survival dataset, we need to produce a synthetic one.
While it is easy to make up arbitrary covariates $\Bfx{i}$, and arbitrary event time $e_i$ and censor time $c_i$, to be useful, we instead want synthetic data to be realistic, and matching the covariates, event distribution, and censoring distribution of some real-world dataset.

This motivated our approach for generating the semi-synthetic datasets (see also the flowchart in Figure~\ref{fig:flowchart}): 
\begin{enumerate}
    \item Start with a real-world survival dataset $\mathcal{D}$;
    \item Calculate some useful statistics and 
    the censoring distributions for generating the 
    censor times;
    \item Produce $\mathcal{D}'$ by removing the censored instances from $\mathcal{D}$, leaving just the uncensored ones;
    \item Form $\mathcal{D}''$ by applying some reasonable but synthetic censoring types to $\mathcal{D}'$. 
    The censoring types are based on the statistics calculated in step 2.
\end{enumerate}

We apply synthetic censoring to $\mathcal{D}'$ based on the independent censorship assumption, 
{by computing a synthetic censoring time $\tilde{c}_i$ for each subject,
and then censoring that subject if the synthetic censoring time is earlier than the event time ($\tilde{c}_i < t_i$),
and otherwise leaving it uncensored.}
We consider six different kinds of censoring distributions for generating the synthetic censor times:
\begin{itemize}
\itemsep0em
    \item Uniform distribution, $\tilde{c}_i \sim U[0, t_{\text{max}}]$ where $t_{\text{max}}$ represents the maximum event time in $\Data'$.
    
    \item Uniform distribution, augmented with administrative censoring at the median event time, formally: $\tilde{c}_i = \min\{c_i', t_{\text{median}}\}$, where $c_i' \sim U[0, t_{\text{max}}]$, and $t_{\text{median}}$ is the median time in $\Data'$.
    
    \item Exponential distribution, $\tilde{c}_i \sim \text{Exp}(\sigma_t)$, where $\sigma_t$ is the standard deviation of the event times in $\Data'$.
    \item Original censoring distribution {\em independent of the features}, $\tilde{c}_i \sim G_{\text{KM}(\tilde{\Data})}(t)$, where $G_{\text{KM}(\tilde{\Data})}$ is the censoring distribution estimated by the KM algorithm,
    with the censor-bit-flipped datasets ($\tilde{D}$ in Figure~\ref{fig:flowchart}). 
    \item Original censoring distribution {\em dependent on the features}, $\tilde{c}_i \sim G_{\text{CoxPH}(\tilde{\Data})}(t \mid \Bfx{i})$, where $G_{\text{CoxPH}(\tilde{\Data})}$ is the feature-dependent censoring distribution estimated by a Cox Proportional Hazard model (CoxPH)~\cite{cox1972regression} with Breslow estimator~\cite{breslow1975analysis}.
    \item Censoring distribution from an external (GBM) dataset, $\tilde{c}_i \sim G_{\text{KM}(\tilde{\Data}_{\text{external}})}(t)$, due to its large percentage of early censoring. 
    We rescaled the sampled censoring time, so the range matches the event times in $\Data'$.
\end{itemize}
While this is not perfect, at least we know that these resulting semi-synthetic datasets, $\mathcal{D}''$, will have many properties of a real-world survival dataset: the real-world covariates domain, close-to-reality event distribution, and (a version of) close-to-reality censoring distribution\footnote{We are aware these steps do introduce some bias to the data, but this seems unavoidable.}.

We apply this synthetic censoring to 5 real-world datasets: GBM, SUPPORT, METABRIC, MIMIC-IV~\cite{johnson2022mimic} all-cause mortality datasets (MIMIC-A) and MIMIC-IV hospital mortality datasets (MIMIC-H). 
Table~\ref{tab:data_comp} summarizes the characteristics of these five datasets, and Appendix~\ref{appendix:data_details} contains information on the dataset preprocessing and MIMIC-IV datasets construction. 

\begin{table*}[!ht]
\centering
\caption{Summary of five datasets used in the empirical comparison. }
\label{tab:data_comp}
\begin{tabular}{lrrrrc}
\toprule
Dataset  & \%Censored & \#Instances & \#Event & Max Event $t_{\text{max}}$ & \#Features$^\dagger$ \\   \midrule
GBM      & 17.65\%  & 595         & 490      & 3,881      &  \; 8 (10)                   \\
SUPPORT  & 31.89\%  & 9,105       & 6,201    & 1,944      & 26 (31)                   \\
METABRIC & 42.07\%  & 1,904       & 1,103    & 355        & 9 (9)                    \\
MIMIC-IV (all-cause mortality) & 66.65\%  & 38,520      & 12,845   & 4,404      & 91 (91)           \\
MIMIC-IV (hospital mortality) & 97.69\%  & 293,907     & 6,780    & 248        & 10 (10)         \\
\bottomrule
\end{tabular}
\\
$^\dagger$ The number of features before performing one-hot encoding, with brackets (the number of features after one-hot encoding).\\
\end{table*}

\subsection{Models}
We compare the time-to-event prediction results using 10 survival prediction models: 
a naive linear regressor,
KM~\cite{kaplan1958nonparametric}, 
CoxPH (using an extension to produce an ISD)~\cite{cox1972regression}, 
Accelerate Failure time (AFT)~\cite{stute1993consistent} with the Weibull distribution,
Gradient Boosting Machine with component-wise least squares (GBM-C)~\cite{hothorn2006survival},
Random Survival Forest (RSF)~\cite{ishwaran2008random},
Multi-Task Logistic Regression (MTLR)~\cite{yu2011learning, jin2015using},
DeepHit~\cite{lee2018deephit},
Survival Cluster Analysis (SCA)~\cite{chapfuwa2020survival},
and Survival Mixture Density Network (S-MDN)~\cite{Han2022SurvivalMD}.
Appendix~\ref{appendix:model_details} describes these 10 models, including the implementation details and hyperparameter settings -- and how the non-survival prediction models dealt with censoring training instances.

All models are trained on the semi-synthetic datasets, $\Data''$. We split the data into a training set (80\%) and a test set (20\%) using a stratified 5-fold cross-validation (5CV) procedure (stratified wrt both time $t$ and censor indicator $\delta$). 
If the model requires a validation set for hyper-parameter tuning or early stopping, we will split 20\% of the training set as the validation set.
We then compute the mean on all the evaluation metrics across the 5CV folds.

\subsection{Evaluation Metrics}
We will use all six MAE metrics described in Section~\ref{sec:mae_censor} to measure the prediction error between the synthetic censoring times and estimated times. 
We will also compute the true MAE score as the ground truth, using the hidden-to-learner true event times.

\subsection{Experimental Results}
We have 5 clinical datasets with 6 types of synthetic censoring, therefore there are in total $5 \times 6 - 1$%
\footnote{The GBM dataset with external (GBM dataset) censoring will just be the same as feature-independent original censorship.} 
semi-synthetic datasets in our experiments. 
Due to the space limit, the main text only reports the results on three datasets (GBM, METABRIC, and MIMIC-A) and three censoring distributions (uniform with administrative censoring, feature-independent original censorship, and feature-dependent original censorship); see Figure~\ref{fig:results_selected}.
Appendix~\ref{appendix:results_full} presents the results of all semi-synthetic datasets
-- all combinations of datasets and censoring distributions.

\begin{figure*}[!ht]
    \centering\includegraphics[width=\textwidth]{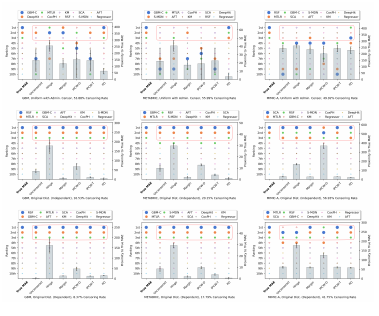}
\vspace*{-0.5in}
    \caption{Evaluation metrics comparison in terms of ranking accuracy (left axis) and proximity to true MAE (right axis). Each row refers to a specific censoring type (uniform with administrative censoring, feature-independent original censoring, and feature-dependent original censoring), and each column to a specific dataset (GBM, METABRIC, MIMIC-A). }
    \label{fig:results_selected}
\end{figure*}

Each subplot in Figure~\ref{fig:results_selected} 
(and also the figures in Appendix~\ref{appendix:results_full}) 
corresponds to a semi-synthetic dataset.
Within each subplot, the first column is the ranking performance evaluation using the 5CV mean of true MAE (ground truth).
{Survival prediction models with larger circles had
better true MAE.}
{The gray bar plots on columns other than the first show
how close that MAE-inspired metric is to the true MAE.
If the error is large, we may still want to know which of the metrics is best at identifying which of the survival prediction models is best, by having a ranking of the models that agrees with the true ranking (for instance by identifying the three best models; see the red rounded-box at the top).}

In the top left plot (GBM dataset with uniform and administrative censorship), we see that the GBM-C model was the most accurate, as it is represented by the largest blue solid circle, followed by DeepHit (second largest orange solid circle), then MTLR (solid green circle), then the 7 models with smaller open circles. They appear, in descending order, in the far left column labeled ``True MAE''. The other columns show how well the various variants of MAE do, in terms of approximating this true MAE.

The red ``rounded box'' at the top of the plot shows each metric's preference over the models.
Here, a concordant preference to true MAE's indicates a great performance. We can see that PO (pseudo-observation) top three choices were GBM-C, DeepHit, and MTLR -- which exactly matched the truth (first column by true MAE). 
By contrast, Margin had DeepHit first (not second), GBM-C second (not first), then MTLR in position 5 (not third). Other models did even worse at matching the order. 
Now consider the gray vertical bars, which show how close that MAE-inspired metric was to the true MAE, over these 10 different learning models. Here, smaller values mean that measure did well. We see that the far right PO was the smallest and with fairly tight error bars. Margin was second best, then Uncensored, IPCW-D, and IPCW-T, essentially tied, though the latter showed smaller variation, and with Hinge coming in last.

To demonstrate the effectiveness of the surrogate time method in MAE-margin, MAE-IPCW-T, and MAE-PO values, we also perform an ablation study that used the mean survival time of the KM estimation on the whole group as the proxy event time; this is the MAE population pseudo-observation (MAE-Pop-PO), see Appendix~\ref{appendix:results_full}.

\subsubsection{Uniform with Administrative Censorship}
The three subplots in the first row in Figure~\ref{fig:results_selected} demonstrate the metrics performance for uniform censoring distributions with administrative censoring. 
The MAE-PO is the best here, in both ranking performance (as it is the only one that correctly identifies the top-three models in GBM and METABRIC, and the only one that identifies two of the top-three models for MIMIC-A) and closeness to the true MAE (significantly better in GBM and METABRIC with $p$-value $<$ 0.05 via $t$-test, and one of the best in MIMIC-A).

MAE-margin is the runner-up as it can identify parts of the best-performing models and has the second closest difference to true MAE. 
Between IPCW-D and IPCW-T, the performance does not have a significant difference in both ranking and proximity to true MAE.  
However, IPCW-D is associated with quite large error bars, which may be because the accuracy of later uncensored subjects will dominate the score (as we discussed in Section~\ref{sec:ipcw-d}).

\subsubsection{Feature-Independent Original Censorship}
The three subplots in the second row in Figure~\ref{fig:results_selected} demonstrate the metrics performance for feature-independent original censoring distribution.
Among all the evaluation metrics, margin and pseudo-observation perform equally the best for identifying the top-three performing models. 
MAE-pseudo-observation has a slight advantage in the proximity to true MAE, as its value is closer to the true MAE on GBM and significantly closer ($p$-value $<$ 0.05) on METABRIC and MIMIC-A datasets. 
In addition, we also observed that IPCW-D is always associated with a large variance (reason explained above). 

\subsubsection{Feature-Dependent Original Censorship}
The three subplots in the third row in Figure~\ref{fig:results_selected} demonstrate the metrics performance for feature-independent original censoring distribution.
MAE-uncensored 
performs the best on GBM datasets,
which may 
be due to the low synthetic censoring rate of this dataset (8.37\% censoring rate), 
meaning 
the whole dataset could be approximately represented by the uncensored population.
Among the MAE metrics that can handle the censored subjects, MAE-margin, IPCW-T, and MAE-PO perform equally well on GBM and MIMIC-A datasets.
For the METABRIC dataset, pseudo-observation is the best 
metric as it has the significantly lowest error to true MAE among all the metrics that can identify the top-three performing models.  

\subsubsection{Other Types of Censoring Distributions}
Figures~\ref{fig:uniform_full}, \ref{fig:exp_full}, and \ref{fig:gbm_full} in Appendix~\ref{appendix:results_full} demonstrate the performance metrics for uniform censoring, exponential censoring, and GBM censoring distribution. 
In all 14 cases (5 for uniform, 5 for exponential, and 4 for GBM censoring), 
MAE-PO is the best 
10 times (71\%) while MAE-margin is the 
best for the other four. 
The MAE-margin metric prevails in 3 out of 5 datasets with exponential censoring, indicating that its performance is either superior or comparable to MAE-PO for this specific censorship type. 

\begin{table}[!t]
\centering
\caption{Summarization of MAE-based metric performance by counting the number of times each metric is best. *Note this includes ties.}
\label{tab:summary}
\vspace{1pt}
\setlength\tabcolsep{2.5pt}
\resizebox{\columnwidth}{!}{\begin{tabular}{lccccccc}
\toprule
         & Uni. & Uni.\&Admin. & Exp. & Orig.(Ind.) & Orig.(Dep.) & GBM & Total \\
\midrule
Uncensor & 0    & 0            & 0    & 0           & 1           & 0   & 1     \\
Hinge    & 0    & 0            & 0    & 0           & 0           & 0   & 0     \\
Margin   & 2*   & 0            & 3    & 2*          & 1           & 0   & 8*    \\
IPCW-D   & 0    & 0            & 0    & 0           & 0           & 0   & 0     \\
IPCW-T   & 0    & 0            & 0    & 0           & 0           & 0   & 0     \\
PO       & 4*   & 5            & 2    & 4*          & 3           & 4   & 22*   \\
\bottomrule
\end{tabular}}
\end{table}

Table~\ref{tab:summary} presents a comprehensive summary of the performance of all MAE-based metrics across 29 semi-synthetic datasets. 
Each column in the table corresponds to a specific censorship type, as outlined in Section~\ref{sec:semi-synthetic_data}. 
The values within the table indicate the number of times that each metric outperforms the others for a given censorship distribution. 
The final columns provide an overview of the cumulative results from all experiments. 
Notably, the MAE-PO metric demonstrates its robustness by emerging as the superior choice in 22 out of 29 experiments, accounting for a 76\% success rate. 
MAE-margin is best for most of the other ones, especially when the censoring is exponentially distributed.
As a result, we recommend that researchers consider using MAE-margin for datasets that seem to exhibit an exponential censoring distribution, and using MAE-PO for datasets with other types of censoring.

\section{Discussion and Conclusion} 

Here, we first argue that MAE should be used as 
an 
evaluation metric for 
evaluating survival prediction models (\eg ISDs),
especially for the standard such task, which requires predicting time-to-event.
Appendix~\ref{sec:six_metrics} shows that 
MAE is a {\em proper} scoring rule (for an uncensored dataset),
and that no other metric quantifies the time-to-event accuracy like MAE does.
To handle the right censorship, we introduced three novel MAE variations: MAE-IPCW-D, MAE-IPCW-T, and MAE-PO, and empirically 
compared 
them to the three existing variants: MAE-uncensored, MAE-hinge, and MAE-margin,
over several realistic semi-synthetic datasets.
These empirical results 
demonstrate that MAE-PO
($i$)~can often correctly identify the top-performing models
and ($ii$)~often has error closest to true MAE.

We recognize that MAE-PO method has certain limitations, particularly subject to the limitation of KM, \eg ($i$) not accounting for the effects of covariates, and ($ii$) the requirement for the independent censoring assumption to be valid.


This research focuses on only the MAE score. 
There are, however, many other ways to measure the errors between the predicted and observed time. 
For instance, one can use the Mean Squared Error (MSE) to penalize large prediction errors, relative MAE or (relative MSE) score to compare models where errors are measured in different scales, or normalized versions to set an upper constraint on the MAE or MSE score. 
Importantly, we can easily 
apply all of the techniques mentioned above
to handle censored subjects to these variations.
{Note also that we can use the realistic semi-synthetic datasets defined above along with the proposed methodology to evaluate other evaluation metrics.}

Following the famous remark
``All models are wrong, but some are useful''~\cite{box1976science},
it is important to precisely define ``usefulness''.
For many clinical tasks, where this corresponds to
time-to-event, it is important to have measures that 
can determine if a model can accurately estimate event time values.
This paper has provided such a measure (MAE-PO) for survival prediction, and also demonstrated that it works effectively. Note this also required finding ways to produce realistic semi-synthetic datasets.
We anticipate others will be able to use this methodology for evaluating evaluation measures,
and more importantly, for the result of this analysis, suggesting that MAE-PO often approximates the true MAE.


\section*{Acknowledgements}
This research received support from the Natural Science and Engineering Research Council of Canada (NSERC), the Alberta Machine Intelligence Institute (Amii), NIH/NIDDK R01-DK123062, NIH/NINDS R61-NS120246, NIH/NHLBI Award R01HL148248, NSF Award 1922658 NRT-HDR: FUTURE Foundations, Translation, and Responsibility for Data Science, and NSF CAREER Award 2145542. The authors extend their gratitude to the anonymous reviewers for their insightful feedback and valuable suggestions.


\nocite{langley00}

\bibliography{paper}
\bibliographystyle{icml2023}

\newpage
\appendix
\onecolumn
\section{Overview of the Appendix}
Appendix~\ref{appendix:notations} provides a summary of the notation and assumptions used throughout the study.
Appendix~\ref{sec:six_metrics} compares the MAE metric with five other commonly used evaluation metrics. 
Appendix~\ref{appendix:pseudo-obs_property} theoretically describes the property of pseudo-observation and proves its authenticity.
Appendix~\ref{appendix:exp_details} describes the implementation details, including the datasets and the ten models used to estimate the timing of events. 
Appendix~\ref{appendix:results_full} includes the complete results for MAE-inspired evaluation metrics comparison on the 29 semi-synthetic datasets.

\section{Notation}
\label{appendix:notations}
In general, a survival dataset contains \textit{N} time-to-event tuples, $\mathcal{D}=\{(\Bfx{i}, t_i, \delta_i)\}_{i=1}^N$, where $\Bfx{i} \in \mathbb{R}^d$ represents the observed $d$-dimensional features for the $i$-th instance, $t_i \in \mathbb{R}_+$ denotes the event or censor time, and $\delta_i \in \{0, 1\}$ is a censor/event indicator where $\delta_i = 0$ means the subject is right-censored (the subject has not experienced an event) and $\delta_i = 1$ means observed event times. We assume each patient has a event time $e_i$, and a censoring time $c_i$, and assign $t_i \leftarrow \min\{e_i, c_i\}$ and $\delta_i \leftarrow \mathbbm{1}[e_i \leq c_i]$.

\paragraph{Assumption} \textit{(Independent censoring)}
In this study, we follow the standard convention that event time and censor time are assumed independent and conditional on the covariates. Formally, for event time $e_i \sim E$, censoring time $c_i \sim C$ and covariates $\Bfx{i} \sim \mathbf{X}$, we assume $E \ \bot \ C \mid \mathbf{X}$.

Random censoring is another commonly used assumption with the definition $T \ \bot \ C$ without conditioning on $\mathbf{X}$. 
However, since random censoring implies independent censoring,  
all the nature and properties proved under the independent assumption will also hold for the random assumption.


The predicted ISD curves can also serve as a surrogate for the risk scores as well. 
For instance, the time-independent risk scores can be defined as the negative value of the predicted survival times of ISDs. 
Alternatively, we can define the time-dependent risk scores, at time $t > 0$, as the negative of the survival probability, $-\Surv{t}{\Bfx{i}}$.

We include a notation table, Table~\ref{tab:notation}, to summarize the symbols and abbreviations we used in the paper.
\begin{table}[!ht]
\centering
\caption{Table of notation. Ordered alphabetically.}
\label{tab:notation}
\begin{tabular}{ll}
\toprule
Symbol/Abbreviation            & Definition \\ \midrule
$c_i$                   & Censor time of subject $i$       \\
$\tilde{c}_i$           & Synthetic censor time of subject $i$       \\
$\Data$                 & Raw Dataset                           \\
$\Data'$                & Raw Dataset with only uncensored subjects      \\
$\Data''$               & Semi-synthetic Dataset with synthetic censoring on $\Data'$     \\
$\tilde{\Data}$         & Raw Dataset with flipped censor bit      \\
$\text{Exp}(\lambda)$   & Exponential distribution with $\lambda$ as the parameter     \\ 
${\mathbb E}[\cdot]$ & Expectation \\
$e_i$                   & Event time of subject $i$       \\
$F(t\mid \Bfx{i})$      & Cumulative density function given the covariates $\Bfx{i}$    \\
$f(t\mid \Bfx{i})$      & Probability density function given the covariates $\Bfx{i}$    \\
$G(t)$                  & Censor distribution       \\
$G_{\text{KM}}(t)$      & Feature-independent censor distribution, estimated using KM model       \\
$G_{\text{CoxPH}}(t\mid \Bfx{i})$ & Feature-dependent censor distribution, estimated using CoxPH model       \\
$N$                     & The size of the dataset, number of subjects      \\
$\mathcal{R}$           & Scoring rule       \\
$S_m(t\mid \Bfx{i})$      & Survival distribution given the covariates, estimated by model $m$      \\
$\SurvKM{t}{\Data}$     & Group-level survival distribution, estimated using KM model on the dataset $\Data$      \\
$t_i$                   & Observed time of subject $i$, $t_i = e_i$ if $\delta_i = 1$ and $t_i = c_i$ if $\delta_i = 0$      \\
$\hat{t}_i$             & Predicted event time for subject $i$  \\
$U[a, b]$               & Uniform distribution starting at $a$ and ending at $b$      \\
$\Bfx{i}$               & Covariates of subject $i$       \\
$\delta_i$              & Censor bit / censor indicator, $\delta_i = \mathbbm{1}_{e_i > c_i}$      \\
$\hat{\theta}$          & Unbiased estimator for the event time       \\ 
$\omega_i$              & Confidence weights for subject $i$       \\ \midrule
1-calibration           & Hosmer-Lemeshow Calibration           \\
BS                      & Brier Score                           \\
C-index                 & Concordance Index                     \\
CV                      & Cross-Validation                      \\
D-calibration           & Distribution Calibration              \\
IBS                     & Integrated Brier Score                \\
IPCW                    & Inverse Probability Censoring Weight  \\
IPCW-D                  & Inverse Probability Censoring Weight on Difference  \\
IPCW-T                  & Inverse Probability Censoring Weight on Time  \\
ISD                     & Individual Survival Distribution      \\
KM                      & Kaplan-Meier estimator                \\
LL                      & Log-Likelihood                        \\
MAE                     & Mean Absolute Error       \\
PDF                     & Probability Density Function          \\
PO                      & Pseudo-Observation                    \\
\bottomrule
\end{tabular}
\end{table}

\section{Comparing MAE with Other Evaluation Metrics}
\label{sec:six_metrics}
Survival prediction is like regression as it predicts a real number (the subject's event time) from a description of a patient $\Bfx{i}$. Given this commonality, we want to evaluate survival prediction models using measures for evaluating regression tasks, such as mean absolute error (MAE) or mean squared error (MSE).

MAE measures the mean of the absolute difference between the true event time and the predicted time.
As suggested in Figure~\ref{fig:6_metrics} (a) the MAE score for an uncensored subject is (reformulated Equation~\ref{eq:MAE_uncensor})
\begin{equation}
\label{eq:MAE_uncensor_appendix}
    \mathcal{R}_{\text{MAE}} (\, \SurvSub{m}{\cdot}{\Bfx{i}}, \, t_i, \, \delta_i = 1 \,) \, = \, |\,t_i - \hat{t}_i\,| ,
\end{equation}
where $\hat{t}_i$ is the median survival time from Equation~\ref{eq:median_survival_time}\footnote{Alternatively, we could use the mean survival time for $\hat{t}_i$ from Equation~\ref{eq:mean_survival_time}.}. MAE is a negative scoring rule which means the smaller the loss, the better the model performs.

\begin{theorem}
\label{them:MAE_median}
    The MAE score for uncensored subjects, $\mathcal{R}_{\text{MAE}} (\, \SurvSub{m}{t}{\Bfx{i}}, \, t_i, \, \delta_i = 1 \,)$, is a proper scoring rule if we use median survival time of the predicted ISD as the predicted time.
\end{theorem}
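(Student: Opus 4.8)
The plan is to reduce the claim to the classical decision-theoretic fact that the median of a distribution minimizes its expected absolute deviation. First I would make ``proper'' precise for this particular rule. Write $Q = S_{\mathrm{true}}(\cdot \mid \Bfx{i})$ for the true conditional law of the event time $T$ given $\Bfx{i}$, and let $P = S_m(\cdot \mid \Bfx{i})$ be an arbitrary forecast ISD. Since $\mathcal{R}_{\text{MAE}}$ in Equation~\ref{eq:MAE_uncensor_appendix} is a negative (loss-oriented) rule, properness means that reporting the truth never does worse in expectation than reporting any other distribution, i.e.
\[
\mathbb{E}_{T \sim Q}\big[\,|T - \mathrm{med}(Q)|\,\big] \;\le\; \mathbb{E}_{T \sim Q}\big[\,|T - \mathrm{med}(P)|\,\big] \qquad \text{for all } P,
\]
where $\mathrm{med}(\cdot)$ denotes the median survival time of Equation~\ref{eq:median_survival_time}.

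The key observation driving the whole argument is that $\mathcal{R}_{\text{MAE}}$ sees the forecast $P$ only through the single number $\hat{t} = \mathrm{med}(P)$. Hence, as $P$ ranges over all distributions, the right-hand side above ranges exactly over the one-parameter family $\{\,g(a) : a \in \mathbb{R}\,\}$ with $g(a) := \mathbb{E}_{T \sim Q}[\,|T-a|\,]$, and the statement collapses to showing that $\mathrm{med}(Q) \in \arg\min_{a \in \mathbb{R}} g(a)$.

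I would then prove this last fact. The cleanest route is the identity $g(a) - g(m) = \int_m^a \big(2F(s) - 1\big)\,ds$, where $F$ is the CDF of $Q$ and $m$ is any median; this holds by the fundamental theorem of calculus since $g$ is Lipschitz with $g'(s) = 2F(s) - 1$ at every continuity point of $F$ (equivalently, differentiate under the integral sign after a Fubini step). For $a > m$ we have $F(s) \ge 1/2$ on $[m,a]$, so the integrand is nonnegative and $g(a) \ge g(m)$; by symmetry the same holds for $a < m$. Thus $g$ attains its minimum at $m = \mathrm{med}(Q)$, which is the displayed inequality and hence properness. A shorter but less quantitative alternative is to note that $a \mapsto |t-a|$ is convex, so $g$ is convex, and the first-order condition $g'(a) = 2F(a) - 1 = 0$ pins the minimizer to the median.

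The step I would be most careful to state rather than gloss over is the distinction between \emph{proper} and \emph{strictly proper}: because the score depends on $P$ only through its median, any two forecasts sharing a median receive identical expected scores, so the rule cannot be strictly proper for the full distribution — it is proper (the truth is a minimizer) and is precisely the elicitation rule for the $0.5$-quantile. I would also flag the mild regularity caveats, namely that the argument requires $\mathbb{E}|T| < \infty$ for $g$ to be finite, and that non-uniqueness of the median (when $F$ is flat at level $1/2$) is harmless since every such value is a minimizer of $g$. These caveats, together with drawing the right conclusion (proper but not strictly so), are the only genuine subtleties; the core inequality is the standard median-minimizes-$L_1$ result.
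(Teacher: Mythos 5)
Your proposal is correct, and at its core it rests on the same reduction the paper uses: properness of MAE-with-median collapses to the classical fact that the median minimizes expected absolute deviation. The execution of that key step differs, though, in a way worth noting. The paper writes the expected score as an integral against $p(\Bfx{i})$ and $f_{\text{true}}(e_i \mid \Bfx{i})$, differentiates in $\hat{t}_i$ via the Leibniz rule, sets the derivative to zero, and concludes $\SurvTrue{\hat{t}_i}{\Bfx{i}} = \tfrac{1}{2}$; strictly speaking this exhibits only a stationary point, and the paper never verifies (e.g., via convexity of $a \mapsto \mathbb{E}\,|T-a|$) that it is a global minimum, nor does it address points where $F$ is non-differentiable. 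Your primary route --- the identity $g(a) - g(m) = \int_m^a \bigl(2F(s) - 1\bigr)\,ds$ for Lipschitz $g$ --- closes both gaps at once: it yields the global inequality $g(a) \ge g(m)$ directly, and it is valid for any law with finite mean, atoms and flat (non-unique) medians included. Your fallback argument (convexity plus the first-order condition) is essentially the paper's proof with the missing convexity justification supplied. You also make explicit two points the paper glosses: that the rule is proper but \emph{not strictly} proper, since any forecast sharing the true median scores identically (so the rule elicits exactly the $0.5$-quantile rather than the full ISD), and the integrability caveat $\mathbb{E}\,|T| < \infty$. The only step I would ask you to state rather than leave implicit is the bridge from your conditional formulation to the paper's dataset-level definition of properness: your inequality holds pointwise for each fixed $\Bfx{i}$, and integrating it against $p(\Bfx{i})$ gives the expected-score inequality over $i \sim \mathcal{D}$ that the paper's definition requires --- one sentence, not a gap.
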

\begin{proof}
By the proper scoring rule definition proposed by \cite{gneiting2007strictly} and \cite{rindt2022survival}, a negative scoring rule (a lower score indicates better performance) is proper if for any model $m$ we have
\begin{equation*}
    \mathbb{E}_{i \sim \mathcal{D}} \ \mathcal{R}(\, \SurvTrue{t}{\Bfx{i}}, \, t_i, \, \delta_i \,) \, \leq \,  \mathbb{E}_{i \sim \mathcal{D}} \ \mathcal{R}(\, \SurvSub{m}{t}{\Bfx{i}}, \, t_i, \, \delta_i \,) \ ,
\end{equation*}
where $\SurvTrue{t}{\Bfx{i}}$ is the true ISD distribution for each subject. Similarly, a positive scoring rule will change the above inequality from $\leq$ to $\geq$.

For an uncensored dataset, every subject has the observed time equal to the event time ($t_i = e_i$). 
We need to prove that, for any $e_i \sim \SurvTrue{\cdot}{\Bfx{i}}$, for any $\Bfx{i} \sim \mathbf{X}$, using the median survival time of the true distribution will minimize the MAE. We can formulate the MAE score by: 
\begin{equation*}
\begin{aligned}
    \mathbb{E}_{\Bfx{i}, t_i \sim \mathcal{D}} \  [\mathcal{R}_{\text{MAE}}(& \, \SurvSub{m}{\cdot}{\Bfx{i}},  \, t_i, \, \delta_i = 1 \,) \mid \delta_i=1] \\
    &= \mathbb{E}_{\Bfx{i} \sim \mathbf{X}, e_i \sim \SurvTrue{e_i}{\Bfx{i}}} \left[\left| e_i - \hat{t}_i  \right|\right] \\
    &=  \int_{\Bfx{i} \sim \mathbf{X}} p(\Bfx{i}) \int_0^{\infty} f_{\text{true}}(e_i \mid \Bfx{i})\left|e_i - \hat{t}_i \right| \ de_i \ d\Bfx{i} \\
    &=  \int_{\Bfx{i} \sim \mathbf{X}} p(\Bfx{i}) \left( \int_0^{\hat{t}_i} f_{\text{true}}(e_i \mid \Bfx{i}) (\hat{t}_i - e_i) \ de_i  +  \int_{\hat{t}_i}^{\infty} f_{\text{true}}(e_i \mid \Bfx{i}) (e_i - \hat{t}_i ) \ de_i \right) d\Bfx{i} \ ,
\end{aligned}
\end{equation*}
where $f_{\text{true}}(e_i \mid \Bfx{i})$ represents the true PDF function, and $p(\Bfx{i})$ represents the marginal over covariates. Computing the derivative of the above equation and setting it to zero will allow us to identify the minimum of this function with respect to $\hat{t}_i$. 
By taking the derivative:
\begin{equation*}
\begin{aligned}
    & \frac{d}{d \hat{t}_i} \int_{\Bfx{i}} p(\Bfx{i}) \left( \int_0^{\hat{t}_i} f_{\text{true}}(e_i \mid \Bfx{i}) (\hat{t}_i - e_i) \ de_i + \int_{\hat{t}_i}^{\infty} f_{\text{true}}(e_i \mid \Bfx{i}) (e_i - \hat{t}_i ) \ de_i \right) d\Bfx{i} \\
    &= \int_{\Bfx{i}} p(\Bfx{i}) \left(f_{\text{true}}(\hat{t}_i \mid \Bfx{i}) (\hat{t}_i - \hat{t}_i ) + \int_0^{\hat{t}_i} f_{\text{true}}(e_i \mid \Bfx{i}) \ de_i - f_{\text{true}}(\hat{t}_i \mid \Bfx{i}) (\hat{t}_i - \hat{t}_i ) - \int_{\hat{t}_i}^{\infty} f_{\text{true}}(e_i \mid \Bfx{i}) \ de_i \right) d\Bfx{i} \\
    &= \int_{\Bfx{i}} p(\Bfx{i}) \left( \int_0^{\hat{t}_i} f_{\text{true}}(e_i \mid \Bfx{i}) \ de_i  -  \int_{\hat{t}_i}^{\infty} f_{\text{true}}(e_i \mid \Bfx{i}) \ de_i \right) d\Bfx{i} \ ,
\end{aligned}
\end{equation*}
where the first equality holds by applying the Leibniz integral rule. Setting the above derivation to zero leads to $\hat{t}_i$ such that
\begin{flalign*}
    \int_0^{\hat{t}_i} f_{\text{true}}(e_i \mid \Bfx{i}) \ de_i  = \int_{\hat{t}_i}^{\infty} f_{\text{true}}(e_i \mid \Bfx{i}) \ de_i 
    \ \ \Longrightarrow \ \ 1 - \SurvTrue{\hat{t}_i}{\Bfx{i}} = \SurvTrue{\hat{t}_i}{\Bfx{i}} \ .  
\end{flalign*}

Therefore, the MAE is minimized if $\SurvTrue{\hat{t}_i}{\Bfx{i}} = \frac{1}{2}$, that is, when $\hat{t}_i$ is the median time of the true ISD distribution. This completes the proof.
\end{proof}
The preceding derivation uses the median survival time and MAE to demonstrate the properness of this combination. 
We can prove that mean survival time with MSE is also proper following the same line of reasoning.

\begin{theorem}
\label{them:MSE_mean}
    The MSE score for uncensored subjects, $\mathcal{R}_{\text{MSE}} (\, \SurvSub{m}{t}{\, \Bfx{i}}, \, t_i, \, \delta_i = 1 \,)$, is a proper scoring rule if we use mean survival time as the predicted time.
\end{theorem}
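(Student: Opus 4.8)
The plan is to mirror the structure of the proof of Theorem~\ref{them:MAE_median} exactly, replacing the absolute-value loss with the squared loss and the median with the mean. First I would recall the properness definition: for a negative scoring rule, I must show that, for every model $m$,
\begin{equation*}
    \mathbb{E}_{i \sim \mathcal{D}} \ \mathcal{R}_{\text{MSE}}(\, \SurvTrue{t}{\Bfx{i}}, \, t_i, \, \delta_i \,) \, \leq \, \mathbb{E}_{i \sim \mathcal{D}} \ \mathcal{R}_{\text{MSE}}(\, \SurvSub{m}{t}{\Bfx{i}}, \, t_i, \, \delta_i \,) \ .
\end{equation*}
Since the dataset is uncensored, $t_i = e_i$ for all subjects, so the MSE risk is $\mathbb{E}_{\Bfx{i} \sim \mathbf{X},\, e_i \sim S_{\text{true}}(\cdot \mid \Bfx{i})}[(e_i - \hat{t}_i)^2]$, which decomposes via $p(\Bfx{i})$ as an outer integral over covariates and an inner integral against the true PDF $f_{\text{true}}(e_i \mid \Bfx{i})$.

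Next I would carry out the same calculus-of-variations argument used for the median: because the covariate marginal $p(\Bfx{i}) \geq 0$ and the integrand factorizes, it suffices to minimize the inner conditional risk $\int_0^\infty f_{\text{true}}(e_i \mid \Bfx{i})(e_i - \hat{t}_i)^2 \, de_i$ pointwise in $\Bfx{i}$. Differentiating with respect to $\hat{t}_i$ and setting the derivative to zero yields
\begin{equation*}
    \frac{d}{d\hat{t}_i} \int_0^\infty f_{\text{true}}(e_i \mid \Bfx{i})(e_i - \hat{t}_i)^2 \, de_i = -2 \int_0^\infty f_{\text{true}}(e_i \mid \Bfx{i})(e_i - \hat{t}_i) \, de_i = 0 \ ,
\end{equation*}
which gives $\hat{t}_i = \int_0^\infty e_i \, f_{\text{true}}(e_i \mid \Bfx{i}) \, de_i = \mathbb{E}[\,e_i \mid \Bfx{i}\,]$, the conditional mean of the true distribution. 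To finish, I would identify this conditional mean with the mean survival time $\int_0^\infty \SurvTrue{t}{\Bfx{i}} \, dt$ from Equation~\ref{eq:mean_survival_time}, via integration by parts (the standard identity $\mathbb{E}[T] = \int_0^\infty S(t)\,dt$ for a nonnegative random variable), and confirm the critical point is a minimum by noting the second derivative is $2\int_0^\infty f_{\text{true}}(e_i \mid \Bfx{i})\, de_i = 2 > 0$ (the objective is strictly convex in $\hat{t}_i$).

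The main obstacle is essentially bookkeeping rather than a deep difficulty: unlike the median case, there is no need to invoke the Leibniz rule on variable limits, since the squared loss differentiates cleanly under a single integral over $[0,\infty)$. The one step that warrants care is the translation between the optimal predictor expressed as $\mathbb{E}[e_i \mid \Bfx{i}]$ and the mean survival time as \emph{defined} in Equation~\ref{eq:mean_survival_time}; I would make that identification explicit so the theorem statement (which phrases the optimum as ``mean survival time'') matches the quantity the variational argument actually produces. With strict convexity establishing uniqueness of the minimizer, plugging $\SurvTrue{\cdot}{\Bfx{i}}$ in for $\SurvSub{m}{\cdot}{\Bfx{i}}$ attains the minimum and hence the required inequality, completing the proof.
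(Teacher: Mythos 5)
Your proposal is correct and follows essentially the same route as the paper's proof: the same double-integral formulation of the uncensored MSE risk over $p(\Bfx{i})$ and $f_{\text{true}}(e_i \mid \Bfx{i})$, the same first-order condition $\hat{t}_i = \mathbb{E}[e_i \mid \Bfx{i}]$, and the same identification of this conditional mean with $\int_0^\infty \SurvTrue{t}{\Bfx{i}}\, dt$. Your two additions --- the explicit second-derivative (strict convexity) check and the explicit integration-by-parts step for $\mathbb{E}[T] = \int_0^\infty S(t)\,dt$ --- are sound refinements the paper leaves implicit, not a different argument.
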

\begin{proof}
Follow the logic in Theorem~\ref{them:MAE_median}, we can formulate the uncensored MSE score as: 
\begin{equation*}
\begin{aligned}
    \mathbb{E}_{\Bfx{i}, t_i \sim \mathcal{D}} \ \left[ \mathcal{R}_{\text{MSE}}(\, \SurvSub{m}{t}{\Bfx{i}}, \, t_i, \, \delta_i = 1 \,) \mid \delta_i = 1 \right]
    &= \mathbb{E}_{\Bfx{i} \sim \mathbf{X}, e_i \sim \SurvTrue{e_i}{\Bfx{i}}} \left[( \hat{t}_i - e_i )^2 \right] \\
    &=  \int_{\Bfx{i} \sim \mathbf{X}} p(\Bfx{i}) \int_0^{\infty} f_{\text{true}}(e_i \mid \Bfx{i}) (\hat{t}_i - e_i)^2  \ de_i \ d\Bfx{i} \ .
\end{aligned}
\end{equation*}
We also compute the derivative of the above equation with respect to $\hat{t}_i$ and set the derivation to zero:
\begin{equation*}
\begin{aligned}
    \frac{d \ \mathbb{E}_{\Bfx{i}, e_i \sim \mathcal{D}} \ \left[ \mathcal{R}_{\text{MSE}}(\, \SurvSub{m}{t}{\Bfx{i}}, \, e_i, \, \delta_i = 1 \,) \mid \delta_i=1 \right] }{d \hat{t}_i}
    &= \frac{d}{d \hat{t}_i} \int_{\Bfx{i}} p(\Bfx{i}) \int_0^{\infty} f_{\text{true}}(e_i \mid \Bfx{i}) (\hat{t}_i - e_i)^2  \ de_i \ d\Bfx{i} \\
    &= \int_{\Bfx{i}} p(\Bfx{i}) \int_0^{\infty} 2 f_{\text{true}}(e_i \mid \Bfx{i}) (\hat{t}_i - e_i)  \ de_i \ d\Bfx{i} \ .
\end{aligned}
\end{equation*}
Setting the above derivation to zero leads to $\hat{t}_i$ such that
\begin{flalign*}
    \hat{t}_i \cdot \int_0^{\infty} f_{\text{true}}(e_i \mid \Bfx{i}) \ de_i  = \int_0^{\infty} e_i \cdot f_{\text{true}}(e_i \mid \Bfx{i})  \ de_i \ \ 
    \Longrightarrow \ \ \hat{t}_i = \int_0^{\infty} \SurvTrue{e_i}{\Bfx{i}} \ de_i \ .  
\end{flalign*}
Therefore, the MSE is minimized if $\hat{t}_i$ is the mean time of the true ISD distribution. Then the proof is complete.
\end{proof}

The main text compared the MAE to five other frequently used metrics for survival prediction (from both discriminative and calibration perspectives). 
It is necessary to have a thorough grasp of the limitations of all six metrics when selecting metrics for model optimization, and separately for model evaluation. 
We suggest that a task-oriented strategy is needed for selecting the right evaluation metrics for the application.
However, the following section argues that MAE loss is the best option if the objective is to quantify the time-to-event accuracy or to make tailored clinical decisions.

Note that the MAE methods (and also C-index) require an ISD model to use a single value as the predicted time for an instance.
In this context, the obvious candidates are either median or mean time (Equations~\ref{eq:mean_survival_time} and~\ref{eq:median_survival_time}) of the survival curves. 
However, in practice, the predicted survival curves from many ISD models often terminate at a specific time (typically the largest observed time in the training dataset) with non-zero probabilities. 
This limitation poses a challenge as the subsequent ISD distribution is unknown/censored, rendering the accurate computation of mean or median time impossible.
In this study, we adopt a linear extrapolation method to extend survival curves, following the approach outlined by~\hbox{\citet{haider2020effective}}.
The extrapolation of survival curves remains an active area of research, and we aim to explore the impact of various extrapolation techniques on MAE evaluation in future investigations.

\subsection{Concordance Index}
The C-index gauges the model's ability to rank the subject's risk.
It is described as the proportion of all comparable pairs where the predictions and outcomes are concordant.
A pair is comparable if we can determine who has the event first.
The C-index is defined by~\citet{harrell1996multivariable}:
\begin{equation*}
\begin{aligned}
    \text{C-index} 
    &= \frac{\sum_{i,j} \mathbbm{1}_{t_i < t_j} \cdot \mathbbm{1}_{\eta_i > \eta_j} \cdot \delta_i }{\sum_{i,j} \mathbbm{1}_{t_i < t_j} \cdot \delta_i } \ ,
\end{aligned}
\end{equation*}
where $\eta_i$ represents the risk score of subject $i$. 
As we discussed in Section~\ref{sec:notation}, the risks can be either defined as the negative of expected time or of survival probability at some specified time. If we use the negative of predicted time, it is called time-independent C-index ($C^{ti}$). 
A visual illustration of $C^{ti}$ using a discordant pair can be found in Figure~\ref{fig:6_metrics} (b), which assesses if the order of true event times (stars) is concordant with the order of predicted event times (triangles).

As many researchers pointed out \cite{antolini2005time}, $C^{ti}$ is known to be biased upwards if the amount of censoring in the data is high (also proved in Proposition~\ref{them:c-index_bound}). 
\add[R3]{Therefore, }\citet{antolini2005time} proposes a time-dependent C-index ($C^{td}$) that claims to solve the issue. 
It is essentially a weighted average of the time-dependent area under the curve (AUC)\footnote{\add[R3]{In datasets with binary outcomes and no censoring, the AUC and C-index are equivalent. In survival settings, $C^{ti}$ can be considered as a generalization of the AUC, as it uses the observed times to construct pairs and predicted events to compare the concordance, whereas the AUC only uses binary statuses and predicted probabilities at a specific time point, respectively.
}} scores over time. 
Unfortunately, this does not solve the issue, as $C^{td}$ is not a proper scoring rule, proved by~\citet{rindt2022survival}. Note that $C^{ti}$ and AUC scores are also not proper, using the same reasoning.

\begin{proposition}
\label{them:c-index_bound}
Given a dataset with a censoring rate of $1 - a$. To evaluate the concordance index, the ratio of comparable pairs to total pairs is bounded by:
\begin{equation}
    a^2\ \leq\ r\ \leq\ 2a - a^2 \ ,
\end{equation}    
where $r$ is the comparable-to-total pairs ratio.
\end{proposition}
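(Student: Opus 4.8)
The plan is to reduce the counting to the censoring status of the \emph{earlier} subject in each pair, and then obtain the two bounds by isolating the pairs whose comparability is forced, independently of the time ordering.

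First I would recall, directly from the C-index definition, that for an unordered pair $\{i,j\}$ with $t_i < t_j$ the contribution $\mathbbm{1}_{t_i < t_j}\cdot \delta_i$ is nonzero exactly when the subject with the smaller observed time is an event ($\delta_i = 1$); hence a pair is comparable if and only if its earlier member is uncensored, and the total number of pairs is $\binom{N}{2}$. Writing $m = aN$ for the number of uncensored subjects, I would partition the $\binom{N}{2}$ pairs into three classes according to the joint censoring status of the two members: both uncensored, both censored, and mixed (one of each).

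The key observation is that two of these classes have their comparability fixed regardless of the time ordering. A pair of two uncensored subjects is \emph{always} comparable (whichever is earlier is an event), while a pair of two censored subjects is \emph{never} comparable (the earlier member has $\delta = 0$). Since the both-uncensored class is contained in the comparable set and the both-censored class is contained in its complement, this gives immediately
\[
\binom{m}{2} \;\le\; \#\{\text{comparable pairs}\} \;\le\; \binom{N}{2} - \binom{N-m}{2}.
\]
Dividing by $\binom{N}{2}$ and substituting $m = aN$, the ratios $\binom{m}{2}/\binom{N}{2}$ and $\binom{N-m}{2}/\binom{N}{2}$ converge to $a^2$ and $(1-a)^2$ respectively, yielding $a^2 \le r \le 1-(1-a)^2 = 2a - a^2$. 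For tightness I would exhibit the extremal arrangements: placing every uncensored subject strictly after every censored subject makes each mixed pair non-comparable and attains the lower endpoint, whereas placing every uncensored subject strictly before every censored subject makes each mixed pair comparable and attains the upper endpoint. Equivalently, one can write $r = a^2 + 2a(1-a)\,p$ with $p \in [0,1]$ the fraction of mixed pairs whose uncensored member is earlier, so that $p = 0$ and $p = 1$ recover the two endpoints.

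The main obstacle is bookkeeping rather than conceptual: the binomial ratios equal the stated bounds only up to an $O(1/N)$ correction, so the statement should be read as holding in the large-sample limit (or asserted up to lower-order terms). In addition, ties in the observed times ($t_i = t_j$) must be handled by a convention (for example, discarding or half-counting tied pairs), and I would note that this choice affects only lower-order terms and hence not the limiting bounds $a^2$ and $2a - a^2$.
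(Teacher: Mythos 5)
Your proof is correct and follows essentially the same route as the paper's: both count event--event pairs as always comparable and censored--censored pairs as never comparable, bound the mixed pairs via the same two extremal time orderings, and recover $a^2 \le r \le 2a - a^2$ in the limit $N \to \infty$ (the paper likewise takes $n \to \infty$ of the exact ratios $\frac{a(na-1)}{n-1}$). Your explicit remarks on the $O(1/N)$ correction and the tie convention are minor refinements the paper leaves implicit, not a different argument.
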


\begin{proof}
For a dataset with $n$ instances, the total number of pairs is $C(n, 2) = \frac{n(n-1)}{2}$ and the number of comparable pairs is varied based on the temporal position of censored and event instances. In the two extreme cases, the minimum number of comparable pairs happens when all censored instances happened before the earliest event instance (none of the censor-event pairs is comparable), while the maximum number of comparable pairs happens when all censored instances happened after the last event instance (arbitrary censor-event pair is comparable). Therefore, the number of comparable pairs is bounded by:
\begin{equation*}
    C(n\times a, 2) \leq \text{number of comparable pairs} \leq C(n\times a, 2) + n a \times (n - n a) \ . 
\end{equation*}
Then the ratio of comparable pairs of total pairs (by dividing the above equation by the total number of pairs $\frac{n(n-1)}{2}$) is bounded by:
\begin{equation*}
    \frac{a(na-1)}{n-1}\ \leq\ r\ \leq\ \frac{a(na-1)}{n-1} + \frac{2na(1-a)}{n-1} \ .
\end{equation*}
{As the dataset size grows}
($n\to \infty$), the bound becomes:
\begin{flalign*}
    \lim_{n\to\infty}\frac{a(na-1)}{n-1} \ \leq\ \; r \leq \lim_{n\to\infty} \frac{2na-a-na^2}{n-1}  \ \ \
    \Longrightarrow \ \ a^2 \leq \; r \leq  2a - a^2  \ ,
\end{flalign*}
then the proof is complete.
\end{proof}

\begin{figure}[t]
    \centering
    \includegraphics[width=0.5\columnwidth]{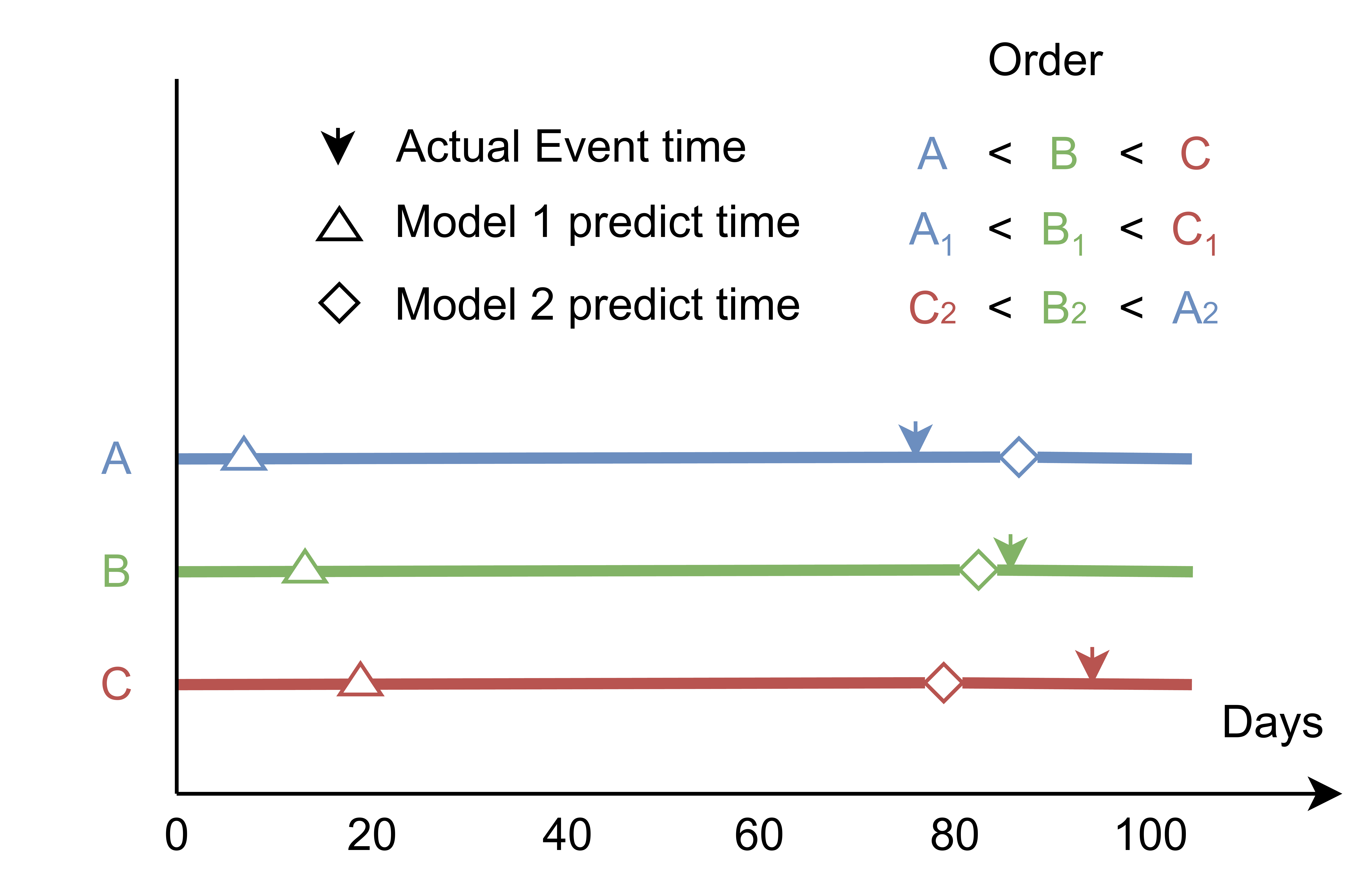}
    \caption{Comparison between MAE and Concordance Index. The arrows indicate the actual event times, and the triangles and diamonds indicate the predicted event times using Model 1 and Model 2, respectively.}
    \label{fig:mae_vs_cindex}
\end{figure}

It would be tempting to assume that MAE and C-index favor the models monotonically, \ie if Model 1 has a lower MAE than Model 2, then Model 1 must have a higher C-index and vice versa.
This is not always the case.
Figure~\ref{fig:mae_vs_cindex} shows three patients who die at 77 days (A), 85 days (B), and 93 days (C).
We can see from the model's predicted event times that Model 1 perfectly ranks the events (C-index = 1), but its predicted event times are far from the actual event times. 
While Model 2 predicts the order of the events in the wrong order (C-index = 0), it has a more accurate time estimation.
This illustration shows that MAE preferences do not coincide with C-index preferences over the model.
This is because MAE estimation focuses on measuring the discriminative accuracy at the individual level, while C-index measures the discriminative accuracy at the pairwise level.

\subsection{Integrated Brier Score}
The integrated Brier score (IBS) is the expectation of single-time Brier scores (BS)~\cite{graf1999assessment} over time. 
For each time $t$, BS calculates the squared difference between the model's predicted probability and the status (0 or 1) of an uncensored subject at that time. 
For subjects censored before time $t$, BS will use the inverse probability censoring weight (IPCW) technique to uniformly transfer their weights to subjects with known status at that time.
IBS for survival prediction is typically defined as:
\begin{equation*}
\begin{aligned}
    \mathcal{R}_{\text{IBS}} (\SurvSub{m}{\cdot}{\Bfx{i}}, t_i, \delta_i) = \frac{1}{t_{\text{max}}}  \cdot \int_0^{t_{\text{max}}} \frac{\SurvSub{m}{t}{\Bfx{i}}^2 \cdot \mathbbm{1}_{t_i \leq t, \delta_i=1}}{G(t_i)} +  \frac{(1 - \SurvSub{m}{t}{\Bfx{i}})^2 \cdot \mathbbm{1}_{t_i > t}}{G(t)} \ dt \ ,
\end{aligned}
\end{equation*}
where $t_{\text{max}}$ is normally the maximum event time of the combined training and validation datasets. 
$G(t)$ is the non-censoring probability at time $t$, which is typically estimated with KM, and its reciprocal is referred to as the IPCW weights (refers to Equation~\ref{eq:MAE_ipcw_diff}). 
The principle of IPCW for IBS is to evenly and repetitively distribute the weight of a censored subject to subjects who will experience the event after its censored time~\cite{graf1999assessment, vock2016adapting}.
Figure~\ref{fig:6_metrics} (c) provides a graphic depiction of IBS for an uncensored subject.
IBS score is represented by the weighted squared error of the shaded regions.

There are some variants within the IBS family. 
For instance, survival continuous ranked probability score (survival-CRPS)~\cite{avati2020countdown}, also called integrated mean absolute error, simply omits the IPCW weights from the calculation, and integrated binomial log-likelihood uses the log-absolute error instead of squared error.
Due to their close resemblance, this paper will solely discuss IBS, as the same reasoning applies to other variants.

IBS is a negative scoring rule which means smaller score implies better performance.
It is also known to be a proper scoring rule~\cite{buja2005loss} if the censoring distribution is estimated correctly~\cite{kvamme2019brier, rindt2022survival}.
The relationship between IBS and MAE is subtle. 
Intuitively, IBS tries to minimize the summation of squared difference in the shaded areas ($\alpha + \beta$) as shown in Figure~\ref{fig:6_metrics} (c), while MAE tries to minimize the absolute difference between the two areas ($|\alpha - \beta |$). Motivate by this, we can conclude that:
\begin{itemize}
\itemsep0em
    \item minimizing IBS can lead to minimizing MAE;
    \item minimizing MAE does not necessarily lead to minimizing IBS.
\end{itemize}
One of the disadvantages of BS and IBS is that they are difficult to interpret, \ie they do not correspond to the expected time error nor to the ranking of the patient's time to event.
It might be useful when clinicians need to make decisions concerning time-specific probabilities, \ie conservative treatment if a 5-year survival probability is greater than 80\%.  

A further issue with the IBS with IPCW weights is that it is dominated by the few uncensored subjects (especially those who experience events at a late period) if the censoring rate is high, which implies a high variance. 
In other words, if only accurate prediction for those uncensored subjects were made at a late time (and an imprecise prediction for others), the overall IBS score will still tend to be good, and {\em vice versa}.
An extreme case happened when the dataset had some administrative censoring\footnote{Administrative censoring refers to the censorship that occurs when the study observation period ends.} subjects, the weights for those subjects cannot transfer to later subjects because they are the last observed time in the datasets; see also examples in~\cite{kvamme2019brier}.



\subsection{Log-likelihood}
The log-likelihood (LL) score measures the logarithmic values of the PDF function at the moment of the event (see Figure~\ref{fig:6_metrics} (d)). For censored patients, LL assesses the survival probability at the censoring time, and it is a positive scoring rule. Therefore, the greater the PDF intensity or survival probability, the higher the performance. Given the assumption of independent and noninformative censorship:
\begin{equation*}
\begin{aligned}
    \mathcal{R}_{LL} (\SurvSub{m}{\cdot}{\Bfx{i}}, t_i, \delta_i) = \delta_i \log f_{\text{m}}(t_i\mid \Bfx{i}) + (1 - \delta_i) \log \SurvSub{m}{t}{\Bfx{i}} \ .
\end{aligned}
\end{equation*}
LL has been widely used as the loss function to train an ISD model (see for example \citet{lee2018deephit, miscouridou2018deep}). \citet{rindt2022survival} proved that LL is a proper scoring rule. 
The following subsections prove that maximizing the LL will lead to minimizing the MAE-hinge.

Despite all LL's benefits, we discourage its usage as an evaluation metric as it does not have a boundary -- so it is not clear what value means we have a good model. 
Furthermore, considering the nature of the PDF function and probability mass function (PMF), 
we cannot use LL to compare 
(1) continuous and discrete models; 
(2) two discrete models with different bin boundaries;
nor (3) models of the same type trained with different datasets/time ranges.

\subsubsection{Minimize uncensored LL will lead to Minimize MAE-uncensored}
The expectation of the predicted survival probability is presented in Equation~\ref{eq:mean_survival_time} (here for simplicity, we omit the condition on $\Bfx{i}$).
For uncensored data, with maximizing the likelihood at the event time $f(e_i) \rightarrow \infty$, the expectation becomes:
\begin{equation*}
\begin{aligned}
    \mu_i &= \lim_{f(e_i) \rightarrow \infty} \int_0^{\infty} S(t) \ dt = \lim_{f(e_i) \rightarrow \infty} \int_0^{\infty} t \ f(t) \ dt \\
    &= \lim_{f(e_i) \rightarrow \infty} \left( \int_0^{e_i -\Delta t} t \ f(t) \ dt  + \int_{e_i -\Delta t}^{{e_i +\Delta t}} t \ f(t) \ dt + \int_{e_i +\Delta t}^\infty t \ f(t) \ dt\right) \\
    &= e_i + \lim_{f(e_i) \rightarrow \infty} \left( \int_0^{e_i -\Delta t} t \ f(t) \ dt  + \int_{e_i +\Delta t}^\infty t \ f(t) \ dt\right) \ ,
\end{aligned}
\end{equation*}
where the first term is the event time $e_i$ when $\Delta t \rightarrow \infty$, and the second term is close to zero ($f(t) \rightarrow 0 \ \text{for} \ t \neq e_i$). So the L1-loss $|\mu_i - e_i |$ will be close to 0.

\subsubsection{Minimize censored LL will lead to Minimize MAE-hinge}
For censored data, the MLE would be $S(c_i) \rightarrow 1$ ($c_i$ is the censored time), the expectation becomes:
\begin{equation*}
\begin{aligned}
    \mu_i &= \lim_{f(e_i) \rightarrow \infty} \int_0^{\infty} S(t) \ dt = \lim_{f(e_i) \rightarrow \infty} \left( \int_0^{c_i} S(t) \ dt  + \int_{c_i}^\infty S(t) \ dt\right) \\
    &= c_i + \lim_{f(e_i) \rightarrow \infty}  \int_{c_i}^\infty S(t) \ dt \ ,
\end{aligned}
\end{equation*}
where the L1-hinge loss max equals to $\max \{(c_i - \mu_i), 0\}=\max \{-\lim_{f(e_i) \rightarrow \infty}  \int_{c_i}^\infty S(t) \ dt, 0\}=0$.

\subsection{Hosmer-Lemeshow Calibration}
Hosmer-Lemeshow calibration (1-calibration)~\cite{hosmer1980goodness} is a statistical test to evaluate the calibration ability of the risk predictions at a specific time.  
Similar to C-index, we can substitute risk prediction with survival probability prediction to analyze the performance of ISD models.

To calculate 1-calibration at the specific time $t^*$, we first sort the predicted probabilities at this time for all patients and group them into $K$ bins, $B_1, \ldots, B_K$.
Within each bin, we calculate the expected number of events using the predicted probabilities, $\mathbb{E}_{\Bfx{i} \sim B_k} [\SurvSub{m}{t^*}{\Bfx{i}}]$, and compare it to the observed event rate. 
Finally, we use the Hosmer-Lemeshow test to assess if the expected and observed event rates are statistically similar.
Figure~\ref{fig:6_metrics} (e) demonstrates this process with four uncensored patients and two groups. 
In Group 1, the observed event rate is 0.5, as patient A was alive at t=2, while patient B died at the same time point. 
The expected number of events in this group is calculated as the average of 0.58 and 0.51, based on the probabilities observed at the intersections. 
Using the same reasoning, the observed event rate is $1$ and the expected number of events is $\frac{0.35 + 0.04}{2}$ in Group 2.
To handle censored subjects, we can use the KM estimator to approximate the observed statistics \cite{d2003evaluation}.

1-calibration can aid clinicians in making group-level decisions, such as arranging medical resources in response to COVID-19 lockdown restrictions being lifted.
For example, if a 1-calibrated (on 100-th day) model forecasts that the expected number of patients with severe symptoms in 100 days is 10,000, then we should arrange the among of ICU beds correspondingly because the expected number and observed numbers are statistically similar.

BS can be decomposed into a 1-calibration part~\cite{degroot1983comparison}. 
Let's first discretize the probability $S(t\mid \Bfx{i})$ by assuming there are $K$ distinct values for the probability.  
Therefore, everyone's predicted probability at time $t^*$ can be rounded to one of $\{p_k\}_{k=1}^K$. 
Let's $n_k(t^*)$ be the number of people that has the same probability $p_k$ at time $t^*$, and $\lambda_k(t^*)$ be the proportion of people in $n_k$ who have event happened before $t^*$, the BS for a set of uncensored instances can be represented as 
\begin{equation*}
    \mathcal{R}_{\text{BS}} (t^*, (\SurvSub{m}{\cdot}{\Bfx{i}}, t_i, \delta_i)) 
    = \frac{1}{N} \sum_{k=1}^{10} n_k(t^*) (\lambda_k(t^*) - p_k)^2 + \frac{1}{N} \sum_{k=1}^{10}n_k(t^*) \lambda_k(t^*) (1 - \lambda_k(t^*)) \ .
\end{equation*}
where the first term is equivalent to 1-calibration at the target time, and the second term is called refinement or sharpness at the target time. 
Some people may argue that there is no compelling demand to use 1-calibration if BS or IBS are used as the evaluation metrics.
However, as we can see from the decomposition, it is trivial to conclude that BS can be large while the 1-calibration term can be small. 
Therefore, if a model has a bad (large) BS or IBS score, it doesn't necessarily mean that it doesn't 1-calibrated.

\subsection{Distribution Calibration}

Distribution calibration (D-calibration)~\cite{haider2020effective} is also a statistical test to evaluate the calibration ability of the entire ISD prediction.
As to the notation, for any probability interval $[a, b] \subset [0, 1]$, let
\begin{equation*}
    \mathcal{D}_{\text{m}}(a, b) = \{ [\Bfx{i}, t_i, \delta_i=1] \in \mathcal{D} \mid \SurvSub{m}{t_i}{\Bfx{i}} \in [a, b]\} \ ,
\end{equation*}
be the subset of the subjects in the dataset $\mathcal{D}$ whose predicted probability at its event time, $\SurvSub{m}{t_i}{\Bfx{i}}$, is in the interval $[a, b]$. 
The model is D-calibrated if the proportion of patients $\frac{|D_{m}(a, b)|}{|D|}$ is statistically similar to the proportion $b - a$. Models can be trained to be D-calibrated using a differentiable relaxation \citep{goldstein2020x}.
\citet{haider2020effective} advises using equal-sized, mutually exclusive intervals with Pearson's $\chi^2$ test to examine if the proportion of patients in each bin is uniformly distributed.
Figure~\ref{fig:6_metrics} (f) provides a visual illustration of a D-calibrated model, where the predicted time-to-event probability is equally distributed across two intervals.
To incorporate censored patients, we can ``split'' each censored patient uniformly among the subsequent probability intervals after the time-to-censor-probability~\cite{haider2020effective}.

As D-calibration is a goodness-of-fit test that involves the entire distribution rather than a single time point, clinicians can use it to make group-level decisions with more flexibility (e.g., estimate the number of ICU beds available in 10 days for a group of patients recruited at various periods),
see motivation in~\citet{kumar2022learning}.

D-calibration and MAE measure different aspects of the performance, meaning they can give different orderings of models. 
For example, a KM model, despite being a perfectly D-calibrated model, will predict the same time for everyone, which can produce a terrible MAE score. 
Contrarily, a model with zero MAE score can have $\SurvSub{m}{t_i}{\Bfx{i}} = 0.5$ for all the subjects, resulting in one of the probability intervals containing all the patients for D-calibration ($p$-value = 0). This example is illustrated in Figure~\ref{fig:max_vs_dcal}.
Below we provide a theoretical comparison between these two metrics.

\begin{figure}[t]
    \centering
    \includegraphics[width=0.6\columnwidth]{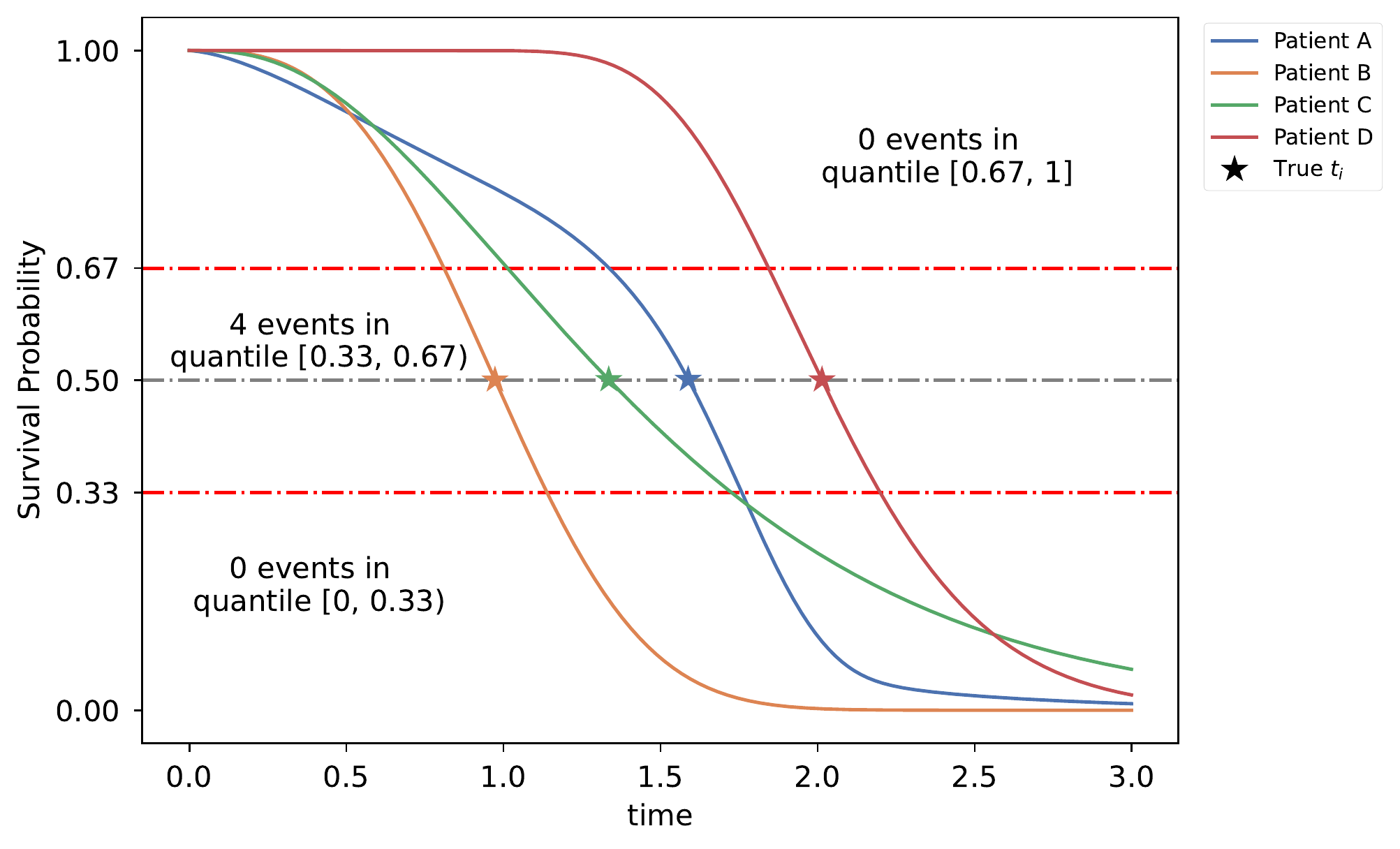}
    \caption{A toy example with four uncensored subjects, with best-possible MAE (MAE = 0) while not D-calibrated. Note that the true event times (position of stars) are overlapped with predicted median survival times.}
    \label{fig:max_vs_dcal}
\end{figure}

\begin{proposition}
    To demonstrate that MAE is fundamentally distinct from D-calibration, we show that it is possible for
\begin{itemize}
\itemsep0em
    \item a model to have small MAE (in fact, 0), but not be D-calibrated; and
    \item a model to be perfectly D-calibrated, but have arbitrarily large MAE (relative to the entire range of times).
\end{itemize}
\end{proposition}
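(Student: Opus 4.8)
The plan is to prove each bullet by explicit construction, since this is an existence statement about separating two metrics. Both claims follow from the fact that MAE (via median survival time, Theorem~\ref{them:MAE_median}) only constrains the predicted ISD at its median, i.e.\ where $\SurvSub{m}{\hat{t}_i}{\Bfx{i}} = 0.5$, whereas D-calibration constrains the \emph{joint distribution} of the values $\{\SurvSub{m}{t_i}{\Bfx{i}}\}$ across all subjects. Because these are essentially orthogonal constraints, I can satisfy one perfectly while violating the other arbitrarily.

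For the first bullet (small MAE, not D-calibrated), I would build the example already sketched in Figure~\ref{fig:max_vs_dcal}. Take any uncensored dataset and let the model predict, for every subject $i$, an ISD whose median survival time equals the true event time exactly, so $\hat{t}_i = e_i = t_i$ and hence $\Rmsr{MAE}{\hat{t}_i}{t_i}{\delta_i=1} = 0$ for all $i$; averaging gives true MAE $=0$. The key observation is that the median constraint fixes only that $\SurvSub{m}{t_i}{\Bfx{i}} = 0.5$ at each subject's own event time. So I additionally stipulate that each predicted curve passes through $0.5$ precisely at $t_i$, which forces \emph{every} subject's predicted-probability-at-event-time to equal $0.5$. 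Then for the D-calibration bins, the set $\mathcal{D}_{\text{m}}(a,b)$ contains all $N$ subjects whenever $0.5 \in [a,b]$ and is empty otherwise, so the proportion $|\mathcal{D}_{\text{m}}(a,b)|/|\mathcal{D}|$ is $1$ for the bin containing $0.5$ and $0$ elsewhere, grossly violating the required uniform value $b-a$ (Pearson $\chi^2$ gives $p$-value $0$). This matches the stated counterexample exactly.

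For the second bullet (perfectly D-calibrated, arbitrarily large MAE), I would invoke the KM model mentioned in the text: a marginal model that predicts the \emph{same} survival curve $\SurvKM{t}{\Data}$ for every subject. Since the KM curve is the nonparametric MLE of the true marginal survival function, the values $\{\SurvKM{t_i}{\Data}\}$ are (asymptotically) uniformly distributed on $[0,1]$ by the probability integral transform, so this model is D-calibrated by construction. However, it issues a single predicted median time $\hat{t}$ for all subjects; I would then exhibit a dataset whose true event times are spread across the full time range (e.g.\ a bimodal or heavy-tailed event distribution), so that $\frac{1}{N}\sum_i |t_i - \hat{t}|$ is a fixed positive fraction of $t_{\max}$ that I can drive toward the maximal possible value by concentrating mass far from the median. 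This makes the MAE arbitrarily large relative to the range of times while D-calibration remains perfect.

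The main obstacle is making the second construction rigorous rather than heuristic: asserting that the KM model is \emph{exactly} D-calibrated on a finite sample requires care, since the probability integral transform yields uniformity only in the population (or asymptotically), and the $\chi^2$ test on a finite sample will not give a perfectly uniform histogram. I expect to handle this either by passing to the population/limit statement ``perfectly D-calibrated'' in the idealized sense, or by directly constructing a finite dataset and a marginal curve whose values $\{\SurvKM{t_i}{\Data}\}$ fall exactly one-per-bin so the empirical proportions match $b-a$ on the chosen equal-sized intervals. The MAE lower bound is then the easy part --- it is just a triangle-inequality computation showing that a single point estimate cannot be close to a widely dispersed set of true times, so the average absolute error is bounded below by a constant times $t_{\max}$.
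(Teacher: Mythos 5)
Your first bullet is exactly the paper's argument (curves crossing $0.5$ precisely at each $t_i$, so MAE $=0$ while every subject's predicted-probability-at-event-time is $0.5$ and one bin swallows the whole dataset), and it is correct as stated.

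Your second bullet has a genuine gap: the KM construction cannot deliver the claim actually being made. The proposition asserts MAE that is \emph{arbitrarily large relative to the range of times} -- the paper normalizes times to $[0,1]$ and, for any $R>1$, exhibits a D-calibrated model with MAE $\geq R$. A single shared KM-type curve cannot do this, because its predicted median lies inside the observed range $[0,t_{\max}]$ and, worse, is a median of the very distribution being scored: the median minimizes mean absolute deviation among constant predictors, so your model's MAE is bounded by $t_{\max}/2$ no matter how you disperse the event times (e.g.\ half the mass at $0$ and half at $t_{\max}$ gives MAE exactly $t_{\max}/2$ for any in-range constant). So your route proves only the weaker statement ``D-calibrated with MAE a constant fraction of the range,'' which is in fact the ceiling the paper itself notes when predictions are confined to the observed range. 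The idea you are missing is to \emph{decouple} the quantity D-calibration sees from the quantity MAE sees on a per-subject basis: the paper assigns each subject its own linear survival curve through $(t_i,\kappa_i)$, choosing $\kappa_i = 1 - t_i/(4R+2t_i) > 1/2$ for half the subjects and $\kappa_i = 1/4 < 1/2$ for the rest, so the values $\SurvSub{m}{t_i}{\Bfx{i}}$ split exactly evenly across two bins (exact finite-sample D-calibration, which also disposes of the asymptotics worry you raised), while the resulting medians $2R + t_i$ sit far outside the data range and force MAE $\geq R$. Your proposal, patched to exact per-bin uniformity, would still be a valid separation example in the qualitative sense, but it does not prove the ``arbitrarily large'' clause of the proposition.
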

\begin{proof}
    
In the following section, we prove the above proposition using uncensored datasets $\mathcal{D} = \{  (\Bfx{i}, t_i, \delta_i = 1) \}_{i=1}^N$ for the sake of simplicity.

\subsubsection{Small MAE but Poor D-calibration}

Assume survival curve $\SurvSub{m}{\cdot}{\Bfx{i}}$ for subject $\Bfx{i}$ is a logistic function, centered at the correct time-of-death $t_i$. 
Here, the median survival time of the estimated ISD is $t_i$, and the MAE error is zero (best MAE score possible) for all the subjects. 
However, as $\SurvSub{m}{t_i}{\Bfx{i}} = 0.5$ for all the subjects (because the logistic function is censored at $t_i$), therefore, one of the probability intervals must contain all the patients, leading to non-D-calibration.

\subsubsection{D-calibrated but Large MAE}

As the MAE has a physical interpretation, which is obviously related to the time ``units'' (\ie minutes, days, or years), we will normalize the times by dividing it by the largest value of the event times, so the largest value is 1 ($t_i \leq 1$ for all $i$).  
For any constant $R > 1$, we will provide a model that is D-calibrated (using 2 discretized bins), but the MAE is $\geq$ R. 
This hypothetical model produces linear survival curves.
Let assume that, for each subject $x_i$ (with time-of-death $e_i$), the curve starts from $(0,1)$ and ends at $(\frac{t_i}{1 - \kappa_i}, 0)$, and it descends linearly through $( t_i , \kappa_i )$ and crosses 0.5 at $(\frac{t_i}{2 (1 - \kappa_i)} , 0.5)$ (see Figure~\ref{fig:linear_curve_demo}).

\begin{figure}[ht]
    \centering
    \includegraphics[width=0.5\columnwidth]{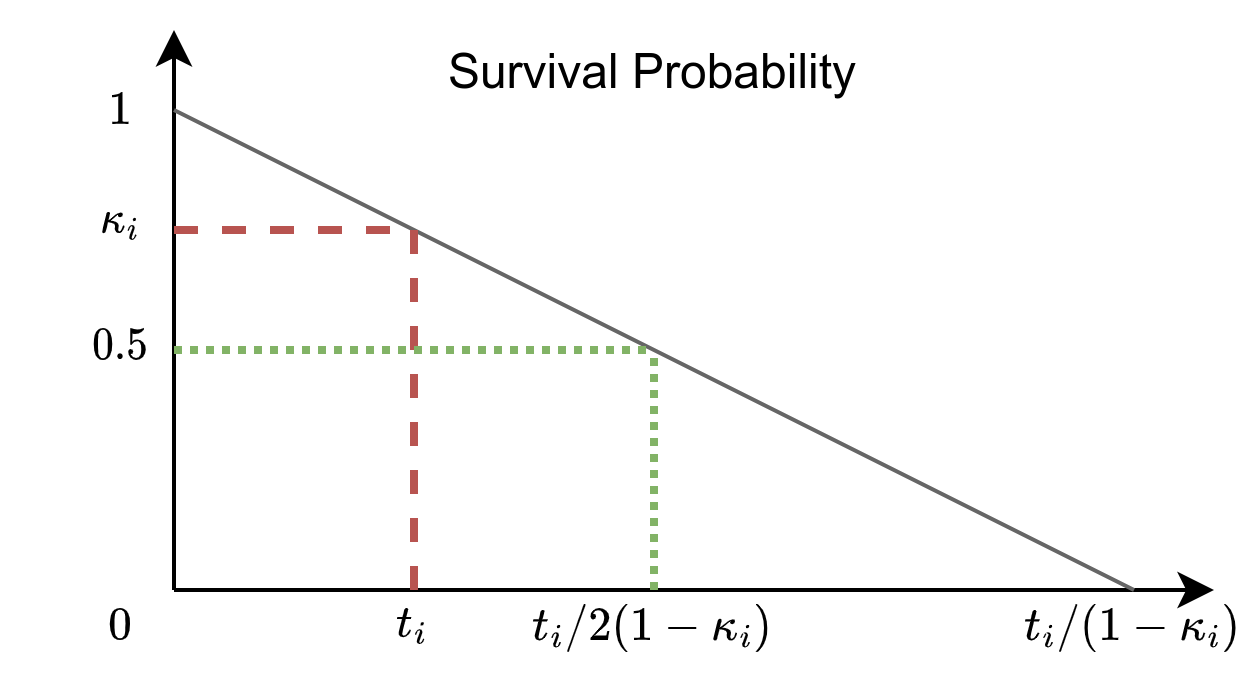}
    \caption{An toy example of a linear survival curve which starts at $(0, 1)$ and crosses $( t_i, \kappa_i )$.}
    \label{fig:linear_curve_demo}
\end{figure}

For the first half subjects, define $\kappa_i$  (the probability associated with the time of death $t_i$ ) to be $\kappa_i   =  1 - \frac{t_i}{4R + 2t_i}$ and for the other half subjects, set $\kappa_i = 0.25$. 

To show this model is D-calibrated with 2 bins ($|B| = 2$), observe that $t_i \leq 1 \leq R$ implies $\frac{t_i}{4R + 2t_i} \leq \frac{t_i}{4R} \leq \frac{1}{4}$, which means $\kappa_i = 1 - \frac{t_i}{4R + 2t_i} \geq 1 - \frac{1}{4} = \frac{3}{4} > \frac{1}{2}$. Hence, $\Surv{t_i}{\Bfx{i}} > \frac{1}{2}$ for each individual of the first half of the subjects, and $1-\frac{3}{4} = \frac{1}{4} < \frac{1}{2}$ for the second half.

Now consider the MAE: For the first half of the subjects, the median survival time for each individual is
\begin{equation*}
      \text{median}( \Surv{\cdot}{\Bfx{i}} ) =   \frac{t_i}{2 \cdot (1-\kappa_i)} =   \frac{t_i}{2} \frac{1}{1-\kappa_i} = \frac{t_i}{2} \frac{4R+2t_i}{t_i} =  2R+t_i \ .
\end{equation*}

This means the MAE for each of the patients is $|2R + t_i - t_i| = 2R$.

Hence, the overall MAE, over the entire set of $N$ patients, is
\begin{equation*}
\begin{aligned}
    \frac{1}{N}  \sum_{i=1}^N |   \text{median}( \Surv{\cdot}{\Bfx{i}} ) - t_i | 
    \geq \frac{1}{N}  \sum_{i=1}^N |  \text{median}( \Surv{\cdot}{\Bfx{i}})   - t_i | 
    = \frac{1}{n}              (\frac{n}{2}  \cdot  2R ) 
    =   R \ .
\end{aligned}
\end{equation*}
Then we complete the proof.
\end{proof}

\textbf{NOTES:}

(1) It is trivial to extend this proof to deal with $|B|=5$ or 10 different probability intervals (rather than the $|B|=2$ covered above):
For 1 bin, use $\kappa_i = 1 - \frac{t_i}{ 2|B| \cdot R + 2t_i}$.  For the other bins, we can use $\kappa _i$ in the 2nd, 3rd, $\dots$, up to B-th interval.
So for 10 bins, these would be $\kappa_i$ = 0.15,  then = 0.25, 0.35, …, 0.85, 0.95 – and everything will still follow the proof.

(2) If there are censored individuals, it is reasonable to have models that predict event times that extend beyond the final recorded time.  But if only uncensored instances, one could argue we should only consider times within the range of the training instances – in our case, with the time-of-events $\in [0,1]$. Of course, if this is the case, it is easy to have an MAE score bounded by 0.5 by choosing the mid-point time for each person. With this in mind, given any uncensored dataset, we can produce a model that (1) is D-calibrated, and (2) has MAE score $\geq \frac{1}{2}$.

\section{Properties of Pseudo-observation}
\label{appendix:pseudo-obs_property}
In this section, we prove some relevant conjectures about pseudo-observation~\cite{andersen2003generalised, andersen2010pseudo} to help justify the MAE-PO as an evaluation metric.

\subsection{Pseudo-observation Values Can be Treated as i.i.d. Random Variable.}
\label{appendix:pseudo-obs_iid}
\citet{graw2009pseudo} proved that the jackknife pseudo-observation values of competing risks could be expressed as a first Gateaux derivative of the Aalen-Johansen estimator~\cite{aalen1978nonparametric} and the cumulative incidence function, if $N \rightarrow \infty$.
Therefore, the pseudo-observation probabilities of competing risks can be approximated by independent and identically distributed variables. 
It is trivial to conclude that,
in the absence of a competing risk, the estimated cumulative incidence function will reduce to the cumulative density function (and the Aalen-Johansen estimator will reduce to KM estimator); hence, the property of independent and identically distributed pseudo-observations also holds for the survival function and the mean survival times.

\subsection{Authenticity of Pseudo-observation}
\label{appendix:pseudo-obs_authenticity}

In this section, we prove that the pseudo-observation value \cite{andersen2003generalised} of a censored instance using the Kaplan Meier (KM) estimator \cite{kaplan1958nonparametric} is always larger or equal to its censoring time. This property has been empirically demonstrated using synthetic examples in~\citet{andersen2010pseudo} but we will give a theoretical proof for the first time.
In the following, we represent the KM estimator as a stepwise function (see Equation~\ref{eq:unbiased_km} and~\ref{eq:biased_km}). Readers can use the linear interpolation version of KM without loss of generality. Furthermore, the Nelson-Aalen (NA) estimator~\cite{nelson1972theory, aalen1978nonparametric} has already been proven to be asymptotically equivalent to KM estimator~\cite{fleming2011counting}. Therefore, the following theoretical conclusions also hold if we use the NA estimator to predict survival curves.

First, let's recall the definition of pseudo-observation. For a censored instance $\delta_i=0$, its pseudo-observation can be expressed as:
\begin{equation}
\label{eq:pseudo_obs}
    e_{\text{pseudo-obs}} (i) = N \times \E_t [ \SurvKM{t}{\Data}] - (N-1) \times \E_t [\SurvKM{t}{\Data^{-i}}] \ ,
\end{equation}
where $\SurvKM{t}{\Data}$ is called the unbiased survival distribution estimator over the whole population using KM, while the $\SurvKM{t}{\Data^{-i}})$ is called the biased estimator over all the data except \textit{i}-th censored instance. Given these two estimations, we have:

\begin{lemma}
\label{lemma:unbiased_vs_biased}
Given an instance censored at $c_i$ with $\delta_i=0$, the unbiased population-based survival distribution estimator is always greater than or equal to the biased population-based estimators. Formally, 
\begin{equation*}
    \SurvKM{t}{\Data} \geq \SurvKM{t}{\Data^{-i}}, \quad \forall c_i,  \forall t \ .
\end{equation*}
\end{lemma}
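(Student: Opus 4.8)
The plan is to work directly with the multiplicative (Kaplan--Meier) form of the estimator and argue that deleting a single \emph{censored} observation can only shrink the risk sets, and therefore can only shrink each factor of the product. Writing the estimator over the distinct observed event times $\tau_1 < \tau_2 < \cdots$ as
\begin{equation*}
    \SurvKM{t}{\Data} \;=\; \prod_{\tau_j \le t} \left( 1 - \frac{d_j}{n_j} \right) \ ,
\end{equation*}
where $d_j$ is the number of events occurring at $\tau_j$ and $n_j = |\{\,k : t_k \ge \tau_j\,\}|$ is the size of the risk set just before $\tau_j$, the first step is to record precisely how removing the censored subject $i$ (censored at $c_i$) affects each ingredient of this product. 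Establishing the lemma then reduces to a factor-by-factor comparison.

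The key observation I would use is that, because subject $i$ is censored, its removal changes no event time and no event count: the set $\{\tau_j\}$ and every $d_j$ are identical for $\Data$ and $\Data^{-i}$. What changes are the risk-set sizes. Subject $i$ lies in the risk set at $\tau_j$ exactly when $\tau_j \le c_i$, so writing $n_j'$ for the risk set under $\Data^{-i}$, I would show $n_j' = n_j - 1$ for every $\tau_j \le c_i$ and $n_j' = n_j$ for every $\tau_j > c_i$. Comparing factors then gives the result: for $\tau_j > c_i$ the two factors coincide, while for $\tau_j \le c_i$ shrinking the denominator yields $1 - d_j/(n_j-1) \le 1 - d_j/n_j$. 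Since $0 \le a_j \le b_j \le 1$ implies $\prod_j a_j \le \prod_j b_j$ for nonnegative terms, the product $\SurvKM{t}{\Data^{-i}}$ is built from factors each no larger than those of $\SurvKM{t}{\Data}$, delivering $\SurvKM{t}{\Data^{-i}} \le \SurvKM{t}{\Data}$ for all $t$.

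I expect the main obstacle to be the risk-set bookkeeping together with a degeneracy check, rather than the final monotonicity argument, which is immediate. Specifically, I must verify that every factor remains well-defined and lies in $[0,1]$: whenever an event occurs at some $\tau_j \le c_i$, both the event subject and subject $i$ belong to the full risk set, so $n_j \ge 2$, which guarantees $n_j - 1 \ge 1 > 0$ and $d_j \le n_j - 1$ (the $d_j$ event subjects are distinct from the removed censored subject). Pinning down exactly this point — that removal decrements $n_j$ only for event times at or before $c_i$, never touches any $d_j$, and never produces a vanishing or negative reduced risk set — is the crux; once it is settled, the product inequality closes the proof.
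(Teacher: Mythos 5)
Your proposal is correct and takes essentially the same route as the paper's proof: both write the Kaplan--Meier estimator in product form over the event times, observe that deleting the censored subject $i$ changes no event time or event count but decrements the risk sets only at event times up to its censoring time, and conclude via the factor-wise inequality $\frac{n_k - 1 - d_k}{n_k - 1} \le \frac{n_k - d_k}{n_k}$. Your explicit degeneracy check (that $n_k \ge 2$ and $d_k \le n_k - 1$ at event times before $c_i$, so every reduced factor stays well-defined in $[0,1]$) is left implicit in the paper but does not alter the argument.
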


\begin{proof}
The unbiased KM estimation for the whole dataset can be represented as:
\begin{equation}
\label{eq:unbiased_km}
    \SurvKM{t}{\Data} \ = \prod_{k:\: 0 \, \leq \, t_k < t} \frac{n_k - d_k}{n_k} \ ,
\end{equation}
with $t_k \in \mathbb{R}^+$ denotes a time when at least one event happened, $d_k \in \mathbb{Z}^+$ denotes the number of events that happened at time $t_k$, and $n_k \in \mathbb{Z}^+$ is the number of individuals known to be at-risk (have not yet had an event or been censored) up to time $t_k$. Then, with the same notation, the biased KM estimation for the leave-\textit{i}-out population can be represented as: 
\begin{equation}
\label{eq:biased_km}
    \SurvKM{t}{\Data^{-i}} \ = \prod_{k: \: 0 \, \leq \, t_k < t_j} \frac{n_k - 1 - d_k}{n_k - 1}  \times \prod_{k: \: t_j \, \leq \, t_k < t} \frac{n_k - d_k}{n_k} \ ,
\end{equation}
with $t_j \in \mathbb{R}^+$ denotes the next time after the censored time $c_m$ when at least one event happened. By definition, $t_j$ and $c_i$ will have the relationship $t_{j-1} < c_i \leq t_j$. Both the nominator and denominator decrease by 1 before $t_j$ due to this censored instance being eliminated from the at-risk population before $t_j$. Therefore, for any natural number $d_k$ given $t_k$, we have
\begin{flalign*}
    \frac{n_k - d_k}{n_k} \geq  \frac{n_k - 1 - d_k}{n_k - 1}, \quad \forall t_k \ 
    \Longrightarrow \ \prod_{k: \: 0 \, \leq \, t_k < t_j} \frac{n_k - d_k}{n_k} \geq  \prod_{k: \: 0 \, \leq \, t_k < t_j} \frac{n_k - 1 - d_k}{n_k - 1} \
    \Longrightarrow \ \SurvKM{t}{\Data} \geq  \SurvKM{t}{\Data^{-i}} \ .
\end{flalign*}
Then the proof is complete. We also present a toy example with visual illustration in Figure~\ref{fig:unbias_biase_visual} to demonstrate this Lemma. As we can see, the leave-$C_3$-out biased estimator is always lower or equal to the unbiased estimator at all times. 
\end{proof}

\begin{figure}[ht]
    \centering
    \includegraphics[width=0.7\columnwidth]{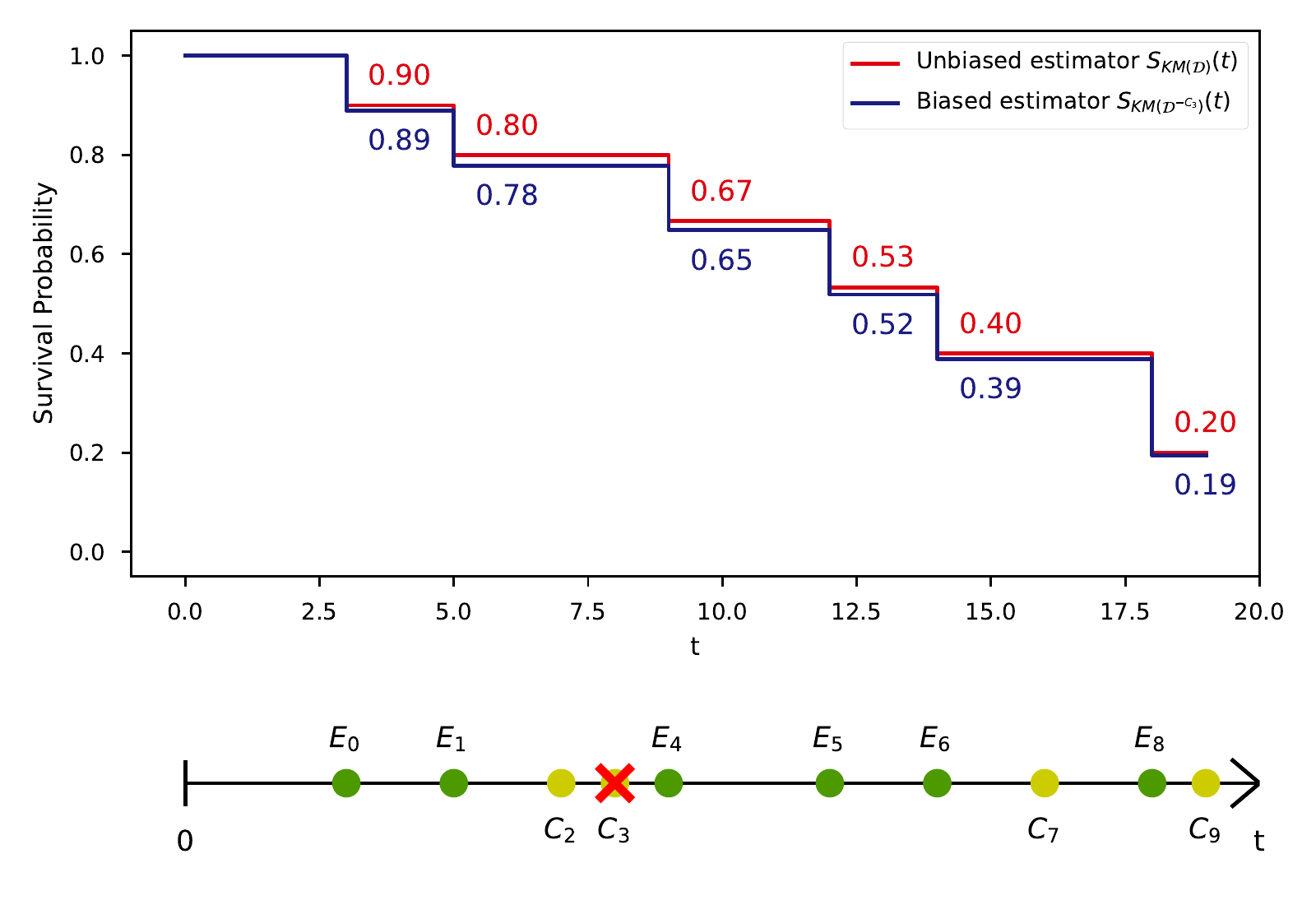}
    \caption{A visual comparison between the unbiased estimator and biased estimator using the Kaplan-Meier method on a toy example. The bottom dots in the $x$-axis shows the order of the event/censor where E (green dots) and C (yellow dots) represent the event and censor respectively. The biased estimator is a KM estimation for the leave-$C_3$-out population.}
    \label{fig:unbias_biase_visual}
\end{figure}

\begin{lemma}
\label{lemma:one_censor_bound}
For a survival dataset with only one censored instance $\{\Bfx{m}, t_m=c_m, \delta_m=0\}$ (also $\delta_i=1$ unless $i=m$), the pseudo-observation value (calculated using KM estimators) for this censored instance is lower bound by its censoring time. 
\begin{equation*}
    e_{\text{pseudo-obs}} (m) \, = \, N \times \E_t [ \SurvKM{t}{\Data}] - (N-1) \times \E_t [\SurvKM{t}{\Data^{-m}}] \, \geq \, c_m \ .
\end{equation*}
\end{lemma}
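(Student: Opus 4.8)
The plan is to compare the given dataset $\Data$ (with its single censored instance $m$) against a \emph{completed} dataset $\Data^{*}$ obtained by re-labelling instance $m$ as an \emph{event} at time $c_m$ (i.e.\ flipping $\delta_m$ to $1$), while leaving the other $N-1$ event instances untouched. The value of this device is that both $\Data^{*}$ and the leave-one-out set $\Data^{-m}$ are then \emph{fully uncensored}, so their Kaplan--Meier estimators coincide with the empirical survival functions and their means are ordinary sample averages. Writing $\mu = \E_t[\SurvKM{t}{\Data}]$, $\mu^{*} = \E_t[\SurvKM{t}{\Data^{*}}]$ and $\mu^{-m} = \E_t[\SurvKM{t}{\Data^{-m}}]$, I would establish the two facts $\mu \geq \mu^{*}$ and $N\mu^{*} - (N-1)\mu^{-m} = c_m$, and then conclude
\begin{equation*}
    e_{\text{pseudo-obs}}(m) \;=\; N\mu - (N-1)\mu^{-m} \;\geq\; N\mu^{*} - (N-1)\mu^{-m} \;=\; c_m .
\end{equation*}

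First I would prove the pointwise monotonicity $\SurvKM{t}{\Data} \geq \SurvKM{t}{\Data^{*}}$ for every $t$, which gives $\mu \geq \mu^{*}$ upon integrating. This is in the same spirit as Lemma~\ref{lemma:unbiased_vs_biased}: the two datasets have identical risk processes and identical event multiplicities for all $t \leq c_m$, so the curves agree on $[0,c_m]$; at $c_m$ the event version $\Data^{*}$ incurs one extra factor $\tfrac{n-d}{n}\leq 1$ that the censored version $\Data$ does not; and at every later event time the risk sets again coincide, since instance $m$ has left the risk set in both datasets. Hence $\SurvKM{t}{\Data^{*}}$ equals $\SurvKM{t}{\Data}$ times that single factor $\leq 1$ on $(c_m,\infty)$, which yields the inequality and therefore $\mu \geq \mu^{*}$.

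Next I would evaluate the right-hand comparison exactly. Because $\Data^{*}$ and $\Data^{-m}$ are uncensored, $\mu^{*} = \tfrac{1}{N}\bigl(c_m + \sum_{l \neq m} e_l\bigr)$ and $\mu^{-m} = \tfrac{1}{N-1}\sum_{l \neq m} e_l$; substituting and telescoping gives $N\mu^{*} - (N-1)\mu^{-m} = c_m$. This is just the elementary fact that, in the absence of censoring, the jackknife pseudo-observation of an instance equals its own observed value. Combining this identity with $\mu \geq \mu^{*}$ from the previous step then completes the argument.

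The main delicacy is the tail behaviour when $c_m$ exceeds every event time (the ``censored-at-last'' configuration): there $\SurvKM{t}{\Data}$ never reaches $0$, so $\mu$ is only well defined under the completion/extrapolation convention already adopted for the KM curve. Under that convention the completed curve for $\Data$ coincides with that of $\Data^{*}$ beyond the last observation, so $\mu = \mu^{*}$ and the bound holds with equality; this is exactly where I expect the argument to need the most care. A secondary point is ties between $c_m$ and an event time, which I would dispatch with the standard convention that censoring at $t_k$ removes the instance from the risk set at $t_k$, leaving the pointwise monotonicity of the first step intact. Everything else is routine.
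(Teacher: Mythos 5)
Your proof is correct, and it takes a genuinely different route from the paper's. The paper proves the bound by direct manipulation of the KM products: it rewrites $e_{\text{pseudo-obs}}(m)$ as $\E_t [\SurvKM{t}{\Data}] + (N-1)\left(\E_t [\SurvKM{t}{\Data}] - \E_t [\SurvKM{t}{\Data^{-m}}]\right)$, discards the tail integrals beyond $t_j$ (the first event time at or after $c_m$) using Lemma~\ref{lemma:unbiased_vs_biased}, and then shows by an explicit telescoping of the step-function integrals over $[0,t_j]$ --- exploiting the single-censoring identity $n_{k-1}-d_{k-1}=n_k$ for all $k \neq j$ --- that the remaining part equals exactly $t_j \geq c_m$. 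Your coupling argument replaces that computation with two clean facts: relabelling the censored instance $m$ as an event at $c_m$ can only lower the KM curve pointwise (a product comparison in the same spirit as, but distinct from, Lemma~\ref{lemma:unbiased_vs_biased}, which removes the instance rather than relabelling it), and in a fully uncensored dataset the jackknife pseudo-observation of the KM mean is exactly the observed value, because the stepwise KM mean reduces to the sample mean. The single-censoring hypothesis enters your proof only through the second fact --- it is precisely what makes $\Data^{*}$ and $\Data^{-m}$ uncensored --- which also makes transparent why the general case (Theorem~\ref{theorem:authenticity}) requires additional work: with other censored instances present, the exact jackknife identity for $\Data^{*}$ fails. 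Two further remarks. First, the paper's computation actually yields the marginally stronger conclusion $e_{\text{pseudo-obs}}(m) \geq t_j$ (and a later result in Appendix~\ref{appendix:pseudo-obs_property} sharpens this to equality with the margin value $t_j + \int_{t_j}^{\infty}\prod_{k:\, t_j \leq t_k < t} \frac{n_k-d_k}{n_k}\,dt$), whereas your route delivers the stated bound $c_m$; both suffice for the lemma. Second, your explicit treatment of the censored-at-last configuration is an improvement over the paper, whose proof silently assumes an event time $t_j$ after $c_m$ exists and does not say how $\E_t [\SurvKM{t}{\Data}]$ is defined when the KM curve never reaches zero; under the drop-to-zero completion convention you invoke, your argument correctly gives equality $e_{\text{pseudo-obs}}(m) = c_m$ there, and your tie-handling at $c_m = t_k$ works under either standard risk-set convention, since the ratio of the modified factors is $\leq 1$ in both cases.
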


\begin{proof}
We start by rearranging the expression in Equation~\ref{eq:pseudo_obs}.
\begin{equation*}
\begin{aligned}
    e_{\text{pseudo-obs}} (m)
    &= \E_t [ \SurvKM{t}{\Data}] + (N - 1)\left[\E_t [ \SurvKM{t}{\Data}] - \E_t [\SurvKM{t}{\Data^{-m}}]\right] \\
    &= \int_{0}^{t_j} \SurvKM{t}{\Data} \, dt + (N - 1)\int_{0}^{t_j} \left(\SurvKM{t}{\Data} - \SurvKM{t}{\Data^{-m}} \right) \, dt \\
    & \qquad + \int_{t_j}^{\infty} \SurvKM{t}{\Data} \, dt + (N - 1)\int_{t_j}^{\infty} \left(\SurvKM{t}{\Data} - \SurvKM{t}{\Data^{-m}} \right) \, dt \\
    &\geq \int_{0}^{t_j} \SurvKM{t}{\Data} \, dt + (N - 1)\int_{0}^{t_j} \left(\SurvKM{t}{\Data} - \SurvKM{t}{\Data^{-m}} \right) \, dt \ .
\end{aligned}
\end{equation*}
The first equality is due to rearranging the factors. The second equality holds by substituting the KM estimators by Equation~\ref{eq:unbiased_km} and~\ref{eq:biased_km}. The third equality separates the range for the integral. And the inequality is because of Lemma~\ref{lemma:unbiased_vs_biased}. Therefore, we can complete the proof as long as we can prove the following inequality:
\begin{equation}
\label{eq:substitute_theorem}
    \int_{0}^{t_j} \SurvKM{t}{\Data} \, dt + (N - 1)\int_{0}^{t_j} \left(\SurvKM{t}{\Data} - \SurvKM{t}{\Data^{-m}} \right) \, dt \; \geq \; c_m \ .
\end{equation}
Or alternatively, by substituting the KM estimators by Equation~\ref{eq:unbiased_km} and~\ref{eq:biased_km}, 
\begin{equation*}
    \int_{0}^{t_j} \prod_{k:\: 0 \, \leq \, t_k < t} \frac{n_k - d_k}{n_k} \, dt + (N - 1)\int_{0}^{t_j} \left( \prod_{k:\: 0 \, \leq \, t_k < t} \frac{n_k - d_k}{n_k} - \prod_{k: \: 0 \, \leq \, t_k < t_j} \frac{n_k - 1 - d_k}{n_k - 1} \right) \, dt \; \geq \; c_m \ .
\end{equation*}
For a survival dataset that contains \emph{only one censored instance}, and $t_{j-1} < c_m \leq t_j$. We have 
\begin{equation*}
    n_{k-1} - d_{k-1} = n_{k}, \quad \forall t_k \; \text{except} \; k=j \ .
\end{equation*}
Therefore, using the above equation, the first term in Equation~\ref{eq:substitute_theorem} can be expressed as:
\begin{equation}
\label{eq:km_term_one_censor}
\begin{aligned}
    \int_{0}^{t_j} \SurvKM{t}{\Data} &\, dt =  \int_{0}^{t_j} \prod_{k:\: 0 \, \leq \, t_k < t} \frac{n_k - d_k}{n_k} \, dt \\
    &= \int_{0}^{t_1} \frac{n_0 - d_0}{n_0} \, dt + \int_{t_1}^{t_2} \frac{n_0 - d_0}{n_0} \frac{n_1 - d_1}{n_1} \, dt + \cdots + \int_{t_{j-1}}^{t_j} \frac{n_0 - d_0}{n_0} \cdots \frac{n_{j-2} - d_{j-2}}{n_{j-2}} \frac{n_{j-1} - d_{j-1}}{n_{j-1}} \, dt \\
    &= \int_{0}^{t_1} \frac{n_0 - d_0}{n_0} \, dt + \int_{t_1}^{t_2} \frac{n_1 - d_1}{n_0} \, dt + \cdots + \int_{t_{j-1}}^{t_j} \frac{n_{j-1} - d_{j-1}}{n_0} \, dt \ .
\end{aligned}
\end{equation}
Similarly, the second term can be expressed as:
\begin{equation}
\label{eq:deduction_term_one_censor}
\begin{aligned}
    (N - 1)\int_{0}^{t_j} &\left( \SurvKM{t}{\Data} - \SurvKM{t}{\Data^{-m}} \right) \, dt \\
    &= (n - 1) \int_{0}^{t_j} (\prod_{k:\: 0 \, \leq \, t_k < t} \frac{n_k - d_k}{n_k} - \prod_{k: \: 0 \, \leq \, t_k < t_j} \frac{n_k - 1 - d_k}{n_k - 1}) \, dt \\
    &= (n-1) \left(\int_{0}^{t_1} (\frac{n_0 - d_0}{n_0} - \frac{n_0 - 1 - d_0}{n_0 - 1})\, dt + \int_{t_1}^{t_2} (\frac{n_1 - d_1}{n_0} - \frac{n_1 - 1 - d_1}{n_0 - 1}) \, dt \right. \\
    & \qquad \left. + \cdots + \int_{t_{j-1}}^{t_j} (\frac{n_{j-1} - d_{j-1}}{n_0} - \frac{n_{j-1} - 1 - d_{j-1}}{n_0 - 1})\, dt \right) \\
    &= (n-1) \left(\int_{0}^{t_1} \frac{d_0}{n_0(n_0 -1)}\, dt + \int_{t_1}^{t_2} \frac{n_0 -n_1 + d_1}{n_0(n_0 - 1)} \, dt + \cdots + \int_{t_{j-1}}^{t_j} \frac{n_0 - n_{j-1} + d_{j-1}}{n_0(n_0 - 1)}\, dt \right) \\
    &= \int_{0}^{t_1} \frac{d_0}{n_0}\, dt + \int_{t_1}^{t_2} \frac{n_0 -n_1 + d_1}{n_0} \, dt + \cdots + \int_{t_{j-1}}^{t_j} \frac{n_0 - n_{j-1} + d_{j-1}}{n_0}\, dt \\
    &= \int_{0}^{t_1} (1 - \frac{n_0 - d_0}{n_0})\, dt + \int_{t_1}^{t_2} (1 - \frac{n_1 - d_1}{n_0}) \, dt + \cdots + \int_{t_{j-1}}^{t_j} (1 - \frac{n_{j-1} - d_{j-1}}{n_0})\, dt \ .
\end{aligned}
\end{equation}
The third equality holds because $n_0 = N$ for the dataset with only one censored instance. We can easily observe that each term in Equation~\ref{eq:km_term_one_censor} just complements the corresponding term in Equation~\ref{eq:deduction_term_one_censor}. Therefore, we can have
\begin{equation*}
        \int_{0}^{t_j} \SurvKM{t}{\Data} \, dt + (N - 1)\int_{0}^{t_j} \left(\SurvKM{t}{\Data} - \SurvKM{t}{\Data^{-m}} \right) \, dt
        = \int_{0}^{t_j} 1\, dt = t_j \; \geq \; c_m \ ,
\end{equation*}
then we complete the proof.
\end{proof}

\begin{theorem}
\label{theorem:authenticity}
For a survival dataset with arbitrary numbers of censored and event instances, the pseudo-observation value (calculated using KM estimators) for any censored instance is lower bound by its censoring time. 
\begin{equation*}
    e_{\text{pseudo-obs}} (i) = N \times \E_t [ \SurvKM{t}{\Data}] - (N-1) \times \E_t [\SurvKM{t}{\Data^{-i}}] \; \geq \; c_i \ .
\end{equation*}
\end{theorem}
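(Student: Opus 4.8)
The plan is to generalize the single-censored computation of Lemma~\ref{lemma:one_censor_bound} to arbitrary censoring, replacing its exact telescoping identity with a one-sided inequality. First I would rearrange the pseudo-observation exactly as in that lemma, writing $e_{\text{pseudo-obs}}(i) = \E_t[\SurvKM{t}{\Data}] + (N-1)\big(\E_t[\SurvKM{t}{\Data}] - \E_t[\SurvKM{t}{\Data^{-i}}]\big)$, and then split every integral at $t_j$, the first event time after $c_i$ (so that $t_{j-1} < c_i \leq t_j$). By Lemma~\ref{lemma:unbiased_vs_biased} the integrand $\SurvKM{t}{\Data} - \SurvKM{t}{\Data^{-i}}$ is nonnegative, and $\SurvKM{t}{\Data}\geq 0$, so the entire $[t_j,\infty)$ contribution is nonnegative and may be discarded. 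This reduces the claim to showing $\int_0^{t_j}\big(N\,\SurvKM{t}{\Data} - (N-1)\,\SurvKM{t}{\Data^{-i}}\big)\,dt \geq c_i$.

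The heart of the argument is to control the integrand $g(t) := N\,\SurvKM{t}{\Data} - (N-1)\,\SurvKM{t}{\Data^{-i}}$ on $[0,t_j)$. In the single-censored case the identity $n_{k-1}-d_{k-1}=n_k$ forces $g(t)\equiv 1$; with other censored subjects present this identity fails (censoring between event times drops the at-risk count further), so I would instead prove the weaker bound $g(t)\geq 1$ for all $t<t_j$. Writing $g_k$ for the constant value of $g$ on the event interval $[t_k,t_{k+1})$ and setting $a_k=\tfrac{n_k-d_k}{n_k}$, $b_k=\tfrac{n_k-1-d_k}{n_k-1}$, a short manipulation gives the recursion $g_k = a_k\,g_{k-1} + (a_k-b_k)\,(N-1)\,\SurvKM{t_{k-1}^+}{\Data^{-i}}$, with base value $g_0=1$ and with $a_k-b_k = \tfrac{d_k}{n_k(n_k-1)}\geq 0$.

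To close the induction I would invoke the classical Kaplan--Meier property that the scaled survival estimate just before an event never undercounts the number at risk (censored-but-surviving individuals inflate the estimator relative to the literal at-risk set). Applied to the leave-$i$-out sample this reads $(N-1)\,\SurvKM{t_k^-}{\Data^{-i}} \geq n_k-1$, where the value $n_k-1$ is the at-risk count of $\Data^{-i}$ at $t_k$, and the ``$-1$'' is accounted for precisely by instance $i$, which is still at risk at every $t_k<t_j$. Since $\SurvKM{t_{k-1}^+}{\Data^{-i}}=\SurvKM{t_k^-}{\Data^{-i}}$ and $(a_k-b_k)(n_k-1)=1-a_k$, substituting into the recursion together with $g_{k-1}\geq 1$ yields $g_k \geq a_k + (1-a_k) = 1$. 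Hence $\int_0^{t_j} g(t)\,dt \geq t_j \geq c_i$, completing the proof; note this in fact gives the slightly stronger bound $e_{\text{pseudo-obs}}(i)\geq t_j$.

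I expect the main obstacle to be exactly the step that the single-censored proof sidesteps: once intermediate censored subjects break the telescoping, $g$ is no longer identically $1$, and the cleanest route to $g(t)\geq 1$ is the at-risk-versus-KM inequality $(N-1)\SurvKM{t_k^-}{\Data^{-i}}\geq n_k-1$. Verifying this carefully---with correct handling of the left limits $t_k^-$ and the bookkeeping that instance $i$ alone explains the gap between the full and leave-$i$-out at-risk counts for every $t_k<t_j$---is where I would concentrate the rigor; the remainder is routine rearrangement supported by Lemma~\ref{lemma:unbiased_vs_biased}.
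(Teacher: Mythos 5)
Your proof is correct, and its outer skeleton coincides with the paper's: the same rearrangement of $e_{\text{pseudo-obs}}(i)$, the same split of every integral at $t_j$, and the same use of Lemma~\ref{lemma:unbiased_vs_biased} to discard the nonnegative tail on $[t_j,\infty)$, reducing the claim to $\int_0^{t_j}\bigl(N\,\SurvKM{t}{\Data}-(N-1)\,\SurvKM{t}{\Data^{-i}}\bigr)\,dt \geq t_j \geq c_i$. Where you genuinely depart from the paper is in how this truncated bound is proved. The paper never forms your combined integrand $g$: it bounds $\int_0^{t_j}\SurvKM{t}{\Data}\,dt$ and $(N-1)\int_0^{t_j}(\SurvKM{t}{\Data}-\SurvKM{t}{\Data^{-i}})\,dt$ \emph{separately}, by shuffling denominators across the factors of the KM products and invoking the ratio bounds of Equations~\ref{eq:inequality_1_any_censor} and~\ref{eq:inequality_2_any_censor}, so that on each event interval the two lower bounds are complementary and sum to $1$, termwise mimicking the exact telescoping of Lemma~\ref{lemma:one_censor_bound}. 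You instead prove the sharper pointwise statement $g(t)\geq 1$ on $[0,t_j)$ by induction over event intervals: your recursion $g_k=a_k g_{k-1}+(a_k-b_k)(N-1)\SurvKM{t_k^-}{\Data^{-i}}$ with $a_k-b_k=\frac{d_k}{n_k(n_k-1)}\geq 0$ checks out, and your closing inequality $(N-1)\,\SurvKM{t_k^-}{\Data^{-i}}\geq n_k-1$ is valid: it telescopes from exactly the same elementary fact $n_{k-1}-d_{k-1}\geq n_k$ that powers the paper's ratio bounds (base case $(N-1)\cdot 1$ dominates the initial leave-one-out at-risk count; step $(n_{k-1}-1)\cdot\frac{n_{k-1}-1-d_{k-1}}{n_{k-1}-1}=n_{k-1}-1-d_{k-1}\geq n_k-1$), and your bookkeeping that instance $i$ alone accounts for the gap between the full and leave-$i$-out at-risk counts is right, since $t_k\leq t_{j-1}<c_i$ for every event time before $t_j$. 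The trade-off: the paper's route is a direct but fairly opaque product manipulation, while yours isolates a reusable and interpretable lemma (sample size times the leave-one-out KM dominates the leave-one-out at-risk count) and delivers the pointwise bound $g\geq 1$ as intermediate information. Note, though, that the strengthened conclusion $e_{\text{pseudo-obs}}(i)\geq t_j$ you highlight is equally implicit in the paper, whose combined bound also integrates to exactly $t_j$; and when writing your version up you should spell out the induction behind the at-risk inequality (you call it classical), since the paper contains no statement of it in that form.
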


\begin{proof}
We can still follow the intuition and steps in the proof for Lemma~\ref{lemma:one_censor_bound}. That means, as long as we can prove the correctness of Equation~\ref{eq:substitute_theorem} in this circumstance, this theorem can be proved. 

In the case of an unlimited number of censor instances, the number of survival instances at the previous time must be large or equal to the at-risk instances at the next time, which means:
\begin{flalign}
    \label{eq:inequality_1_any_censor}
    && \frac{n_{k-1} - d_{k-1}}{n_{k}} & \geq 1 , \quad \forall t_k \ , && \\
    \label{eq:inequality_2_any_censor}
    \Longrightarrow && \prod_{k: \: t_1 \, \leq \, t_k < t_j} \frac{n_{k-1} - d_{k-1}}{n_{k}} & \geq \prod_{k: \: t_1 \, \leq \, t_k < t_j} \frac{n_{k-1} - 1 - d_{k-1}}{n_{k} - 1} \geq 1 \ . &&  
\end{flalign}
Therefore, using the above equations, the first term in Equation~\ref{eq:substitute_theorem} can be expressed as:
\begin{equation}
\label{eq:km_term_any_censor}
\begin{aligned}
    \int_{0}^{t_j} \SurvKM{t}{\Data} &\, dt \\
    &= \int_{0}^{t_1} \frac{n_0 - d_0}{n_0} \, dt + \int_{t_1}^{t_2} \frac{n_0 - d_0}{n_0} \frac{n_1 - d_1}{n_1} \, dt + \cdots + \int_{t_{j-1}}^{t_j} \frac{n_0 - d_0}{n_0} \cdots \frac{n_{j-2} - d_{j-2}}{n_{j-2}}\frac{n_{j-1} - d_{j-1}}{n_{j-1}} \, dt \\
    &= \int_{0}^{t_1} \frac{n_0 - d_0}{n_0} \, dt + \int_{t_1}^{t_2} \frac{n_0 - d_0}{n_1} \frac{n_1 - d_1}{n_0} \, dt + \cdots + \int_{t_{j-1}}^{t_j} \frac{n_0 - d_0}{n_1} \cdots \frac{n_{j-2} - d_{j-2}}{n_{j-1}}\frac{n_{j-1} - d_{j-1}}{n_0} \, dt \\
    &\geq \int_{0}^{t_1} \frac{n_0 - d_0}{n_0} \, dt + \int_{t_1}^{t_2} \frac{n_1- d_1}{n_0} \, dt + \cdots + \int_{t_{j-1}}^{t_j} \frac{n_{j-1} - d_{j-1}}{n_0} \, dt \ .
\end{aligned}
\end{equation}
The second equality is simply by changing the position of denominators. The inequality holds because of Equation~\ref{eq:inequality_1_any_censor}. Similarly, the second term in Equation~\ref{eq:substitute_theorem} can be expressed as:
\begin{equation}
\label{eq:deduction_term_any_censor}
\begin{aligned}
    &(N - 1)\int_{0}^{t_j} \left(\SurvKM{t}{\Data} - \SurvKM{t}{\Data^{-i}} \right) \, dt\\
    &= (N-1) \left( \int_{0}^{t_1} (\frac{n_0 - d_0}{n_0} - \frac{n_0 - 1 - d_0}{n_0 - 1})\, dt + \int_{t_1}^{t_2} (\frac{n_0 - d_0}{n_1} \frac{n_1 - d_1}{n_0} - \frac{n_0 - 1 - d_0}{n_1 - 1} \frac{n_1 - 1 - d_1}{n_0 - 1}) \, dt \right. \\
    & \quad + \left. \cdots + \int_{t_{j-1}}^{t_j} (\frac{n_0 - d_0}{n_1} \cdots \frac{n_{j-2} - d_{j-2}}{n_{j-1}} \frac{n_{j-1} - d_{j-1}}{n_0} - \frac{n_0 - 1 - d_0}{n_1 - 1} \cdots \frac{n_{j-2} - 1 - d_{j-2}}{n_{j-1} - 1} \frac{n_{j-1} - 1 - d_{j-1}}{n_0 - 1} )\, dt \right)\\
    &\geq (N-1) \left(\int_{0}^{t_1} \frac{d_0}{n_0(n_0 -1)}\, dt + \int_{t_1}^{t_2} \frac{n_0 - n_1 + d_1}{n_0(n_0 - 1)} \, dt + \cdots + \int_{t_{j-1}}^{t_j} \frac{n_0 - n_{j-1} + d_{j-1}}{n_0(n_0 - 1)}\, dt \right) \\
    &\geq  \int_{0}^{t_1} (1 - \frac{n_0 - d_0}{n_0})\, dt + \int_{t_1}^{t_2} (1 - \frac{n_1 - d_1}{n_0}) \, dt + \cdots + \int_{t_{j-1}}^{t_j} (1 - \frac{n_{j-1} - d_{j-1}}{n_0})\, dt \ .
\end{aligned}
\end{equation}
The first equality is also by changing the position of denominators. The first inequality holds because of Equation~\ref{eq:inequality_2_any_censor}. The second inequality follows the rules of Equation~\ref{eq:deduction_term_one_censor}, as well as $n_1 \leq N$ for any time for the dataset with unlimited censored instances. As Lemma~\ref{lemma:one_censor_bound}, we can now combine the Equations~\ref{eq:km_term_any_censor} and \ref{eq:deduction_term_any_censor}:
\begin{equation*}
        \int_{0}^{t_j} \SurvKM{t}{\Data} \, dt + (N - 1)\int_{0}^{t_j} \left(\SurvKM{t}{\Data} - \SurvKM{t}{\Data^{-i}} \right) \, dt
        \geq \int_{0}^{t_j} 1\, dt = t_j \; \geq \; c_i \ ,
\end{equation*}
then we complete the proof.
\end{proof}

\subsection{Susceptible to Dataset Size}

We observe that the pseudo-observation value for a censored subject in the dataset is not ``invariant'' under duplication of subjects in the dataset, unlike other MAE-based metrics such as MAE-margin or MAE-IPCW-T.
For example, if we have a dataset $\Data$ with 100 subjects and one censored subject, $i$, we can calculate the pseudo-observation value for $i$ using two Kaplan-Meier (KM) estimations. 
However, if we duplicate every subject in $\Data$ and create a new dataset $\Data'$ with 200 subjects and two censored subjects (including $i$), the pseudo-observation values for $i$ in $\Data$ will not be the same as the PO for that one $i$ in $\Data$, despite the KM curves being identical in both datasets ($\SurvKM{t}{\Data} = \SurvKM{t}{\Data'}$ for all $t$). 
We called this property ``susceptible to dataset size''.

However, we view this property neither as an advantage nor a limitation. 
In one way, it is arguable that duplicating the subjects violates the i.i.d. assumption of the data, which makes two different datasets and therefore the surrogate event times for the same subjects should be different. 
In another way, if the KM curves represent the true survival distribution of the datasets, then the same subjects in the same survival distribution should have the same surrogate times, no matter the dataset size.

\subsection{Relationship between Pseudo-observation and Other MAE Metrics}
In this section, we investigate the properties of pseudo-observations by comparing them with other MAE metrics.

\begin{theorem}
For a survival dataset with only one censored instance $\{\Bfx{m}, t_m=c_m, \delta_m=0\}$ (also $\delta_i=1$ unless $i=m$), the pseudo-observation value and the margin ``best-guess'' value (both calculated using KM estimators) are the same, $e_{\text{pseudo-obs}} (m) = e_{\text{margin}} (m)$.
By their definition, it equals to:
\begin{equation*}
   N \times \E_t [ \SurvKM{t}{\Data}] - (N-1) \times \E_t [\SurvKM{t}{\Data^{-m}}] = c_m + \frac{\int_{c_m}^{\infty} \SurvKM{t}{\Data} dt}{\SurvKM{c_m}{\Data}} \ .
\end{equation*}
\end{theorem}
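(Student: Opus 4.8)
The plan is to prove the identity by splitting the pseudo-observation integral at $t_j$, the first event time after the censoring time (so that $t_{j-1} < c_m \le t_j$ and no event falls in the open interval $(t_{j-1}, t_j)$), and then matching each piece against the corresponding piece of the margin formula. Writing $e_{\text{pseudo-obs}}(m) = \int_0^\infty \left[ N\,\SurvKM{t}{\Data} - (N-1)\,\SurvKM{t}{\Data^{-m}} \right] dt$, I would separate this into the range $[0, t_j]$ and the tail $[t_j, \infty)$. For the first range I can reuse the computation already carried out in the proof of Lemma~\ref{lemma:one_censor_bound}: Equations~\ref{eq:km_term_one_censor} and~\ref{eq:deduction_term_one_censor} combine to show, with exact equality, that $\int_0^{t_j} \left[ N\,\SurvKM{t}{\Data} - (N-1)\,\SurvKM{t}{\Data^{-m}} \right] dt = t_j$. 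So the first half of the argument is essentially free.

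The core new step is the tail $[t_j, \infty)$. The key observation is that beyond $t_j$ the unbiased and biased KM curves carry exactly the same event factors $\frac{n_k - d_k}{n_k}$, so they differ only by the constant prefactor accumulated before $t_j$: for all $t \ge t_j$ one has $\SurvKM{t}{\Data} = \frac{A}{B}\,\SurvKM{t}{\Data^{-m}}$, where $A = \SurvKM{c_m}{\Data}$ and $B = \SurvKM{c_m}{\Data^{-m}}$ are the two survival values at the censoring time (note that $\SurvKM{\cdot}{\Data}$ is flat between $t_{j-1}$ and $t_j$, so evaluating at $c_m$ or at $t_j^-$ gives the same value). Because there is a single censored subject, no censoring occurs before $t_j$, so the at-risk counts satisfy $n_{k+1} = n_k - d_k$ there and both products telescope: $A = \nu/N$ and $B = (\nu-1)/(N-1)$, where $\nu$ is the number of subjects in $\Data$ still at risk just before $c_m$. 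This gives the clean identity $N A - (N-1) B = \nu - (\nu-1) = 1$. Hence on the tail the integrand collapses to $\left[ N - (N-1)\tfrac{B}{A} \right]\SurvKM{t}{\Data} = \tfrac{1}{A}\left[ N A - (N-1) B \right]\SurvKM{t}{\Data} = \tfrac{1}{\SurvKM{c_m}{\Data}}\SurvKM{t}{\Data}$, so that $\int_{t_j}^\infty \left[ \cdots \right] dt = \frac{1}{\SurvKM{c_m}{\Data}}\int_{t_j}^\infty \SurvKM{t}{\Data}\, dt$.

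Combining the two pieces gives $e_{\text{pseudo-obs}}(m) = t_j + \frac{1}{\SurvKM{c_m}{\Data}}\int_{t_j}^\infty \SurvKM{t}{\Data}\, dt$. To finish, I would expand the margin value $e_{\text{margin}}(m) = c_m + \frac{1}{\SurvKM{c_m}{\Data}}\int_{c_m}^\infty \SurvKM{t}{\Data}\, dt$ by splitting its integral at $t_j$; since $\SurvKM{\cdot}{\Data}$ is flat (equal to $\SurvKM{c_m}{\Data}$) on $[c_m, t_j)$, the piece $\int_{c_m}^{t_j} \SurvKM{t}{\Data}\, dt = (t_j - c_m)\,\SurvKM{c_m}{\Data}$ contributes exactly $t_j - c_m$ after dividing by $\SurvKM{c_m}{\Data}$, and adding $c_m$ recovers $t_j$ plus the identical tail term. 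Hence the two expressions coincide. I expect the main obstacle to be the bookkeeping in the telescoping step --- keeping the at-risk counts $n_k$ for $\Data$ versus $n_k - 1$ for $\Data^{-m}$ straight, and handling the boundary convention for $\SurvKM{\cdot}{\Data}$ evaluated at $c_m$ versus $t_j$ --- rather than anything conceptually deep, since Lemma~\ref{lemma:one_censor_bound} already supplies the harder half of the integral.
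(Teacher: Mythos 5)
Your proposal is correct and follows essentially the same route as the paper's proof: the same split of the integral at $t_j$, the same reuse of Equations~\ref{eq:km_term_one_censor} and~\ref{eq:deduction_term_one_censor} to get the exact value $t_j$ on $[0,t_j]$, and the same tail computation --- your telescoped identity $N A - (N-1)B = \nu - (\nu-1) = 1$ with $A = \frac{n_{j-1}-d_{j-1}}{n_0}$ and $B = \frac{n_{j-1}-1-d_{j-1}}{n_0-1}$ is precisely the paper's cancellation $\frac{n_{j-1}-d_{j-1}}{n_0} + \frac{n_0-n_{j-1}+d_{j-1}}{n_0} = 1$, merely repackaged as a constant-ratio argument between the two KM curves. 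Your handling of the margin side (flatness of $\SurvKM{\cdot}{\Data}$ on $[c_m, t_j)$ contributing $t_j - c_m$) also matches the paper's derivation, so there is nothing to correct.
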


\begin{proof}
Recall the proof in Lemma~\ref{lemma:one_censor_bound}, we can have:
\begin{equation*}
\begin{aligned}
    e_{\text{pseudo-obs}} (m)
    &= \int_{0}^{t_j} \SurvKM{t}{\Data} \, dt + (n - 1)\int_{0}^{t_j} \left(\SurvKM{t}{\Data} - \SurvKM{t}{\Data^{-m}} \right) \, dt \\
    & \qquad + \int_{t_j}^{\infty} \SurvKM{t}{\Data} \, dt + (n - 1)\int_{t_j}^{\infty} \left(\SurvKM{t}{\Data} - \SurvKM{t}{\Data^{-m}} \right) \, dt \\
    &= t_j + \int_{t_j}^{\infty} \SurvKM{t}{\Data} \, dt + (n - 1)\int_{t_j}^{\infty} \left(\SurvKM{t}{\Data} - \SurvKM{t}{\Data^{-m}} \right) \, dt \ ,
\end{aligned}
\end{equation*}
in the circumstance of only one censored instance. The sum of the first two terms is equal to $t_j$ because of Equation~\ref{eq:km_term_one_censor} and~\ref{eq:deduction_term_one_censor}. Then, we can replace the unbiased and biased KM estimation by Equation~\ref{eq:unbiased_km} and~\ref{eq:biased_km} (also use the fact that $n_{k-1} - d_{k-1} = n_{k}$ for all $t_k$ except $k=j$). 
\begin{equation*}
\begin{aligned}
    e_{\text{pseudo-obs}} (m)
    &= t_j + \int_{t_j}^{\infty} \frac{n_{j-1} - d_{j-1}}{n_0} \prod_{k: \: t_j \, \leq \, t_k < t} \frac{n_k - d_k}{n_k} \, dt \\
    & \quad + (N - 1)\int_{t_j}^{\infty} ( \frac{n_{j-1} - d_{j-1}}{n_0} - \frac{n_{j-1} - 1 - d_{j-1}}{n_0 - 1}) \prod_{k: \: t_j \, \leq \, t_k < t} \frac{n_k - d_k}{n_k} \, dt \\
    &= t_j + \int_{t_j}^{\infty} ( \frac{n_{j-1} - d_{j-1}}{n_0} + \frac{n_0 - n_{j-1} + d_{j-1}}{n_0}) \prod_{k: \: t_j \, \leq \, t_k < t} \frac{n_k - d_k}{n_k} \, dt \\
    &= t_j + \int_{t_j}^{\infty} \prod_{k: \: t_j \, \leq \, t_k < t} \frac{n_k - d_k}{n_k} \, dt \ .
\end{aligned}
\end{equation*}
The second equality is from factorization with $N=n_0$. Similarly, we can get a derivation for the margin best-guess time:
\begin{equation*}
\begin{aligned}
    e_{\text{margin}} (m)
    &= c_m + \frac{\int_{c_m}^{\infty} \SurvKM{t}{\Data} \ dt}{\SurvKM{c_m}{\Data}} \\
    &= c_m + \frac{\int_{c_m}^{t_j} \SurvKM{c_m}{\Data} \ dt + \int_{t_j}^{\infty} \SurvKM{t}{\Data} \ dt}{\SurvKM{c_m}{\Data}} \\
    &= c_m + t_j - c_m + \frac{\int_{t_j}^{\infty} \SurvKM{c_m}{\Data} \prod_{k: \: t_j \, \leq \, t_k < t} \frac{n_k - d_k}{n_k} \ dt}{\SurvKM{c_m}{\Data}} \\
    &= t_j + \int_{t_j}^{\infty} \prod_{k: \: t_j \, \leq \, t_k < t} \frac{n_k - d_k}{n_k} \, dt \ .
\end{aligned}
\end{equation*}
Here we complete the proof by showing that the derivation of pseudo-observations and margin best-guess values is the same. Please note that this derivation for the margin best-guess time is not limited to this special dataset with only one censored instance.
\end{proof}

\begin{lemma}
For a survival dataset with arbitrary numbers of censored and event instances, the pseudo-observation value for any censored instance is always higher or equal to the margin best-guess value.
Formally:
\begin{equation*}
   N \times \E_t [ \SurvKM{t}{\Data}] - (N-1) \times \E_t [\SurvKM{t}{\Data^{-m}}] \geq c_m + \frac{\int_{c_m}^{\infty} \SurvKM{t}{\Data} \, dt}{\SurvKM{c_m}{\Data}} = t_j + \int_{t_j}^{\infty} \prod_{k: \: t_j \, \leq \, t_k < t} \frac{n_k - d_k}{n_k} \, dt \ .
\end{equation*}
\end{lemma}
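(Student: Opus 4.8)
The plan is to mirror the integral decomposition already used for Lemma~\ref{lemma:one_censor_bound} and Theorem~\ref{theorem:authenticity}. Let $t_j$ be the first event time after the censoring time, so $t_{j-1} < c_m \leq t_j$, and split
\begin{equation*}
    e_{\text{pseudo-obs}}(m) = \underbrace{\int_{0}^{t_j} \SurvKM{t}{\Data}\,dt + (N-1)\int_{0}^{t_j}\left(\SurvKM{t}{\Data}-\SurvKM{t}{\Data^{-m}}\right)dt}_{\text{(A)}} + \underbrace{\int_{t_j}^{\infty} \SurvKM{t}{\Data}\,dt + (N-1)\int_{t_j}^{\infty}\left(\SurvKM{t}{\Data}-\SurvKM{t}{\Data^{-m}}\right)dt}_{\text{(B)}} .
\end{equation*}
Theorem~\ref{theorem:authenticity} already gives $\text{(A)} \geq t_j$, and the statement itself (via the preceding theorem) records $e_{\text{margin}}(m) = t_j + \int_{t_j}^{\infty}\prod_{k:\, t_j \leq t_k < t}\frac{n_k-d_k}{n_k}\,dt$. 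So the whole claim reduces to showing that (B) dominates the integral part of $e_{\text{margin}}(m)$.

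Second, I would exploit that removing instance $m$ alters the at-risk counts only at event times before $t_j$, so the biased and unbiased KM estimators share all factors at event times $\geq t_j$. Writing $R(t) = \prod_{k:\, t_j \leq t_k < t}\frac{n_k-d_k}{n_k}$ and letting $P = \prod_{k:\, 0 \leq t_k < t_j}\frac{n_k-d_k}{n_k}$ and $Q = \prod_{k:\, 0 \leq t_k < t_j}\frac{n_k-1-d_k}{n_k-1}$ be the unbiased and biased prefixes, we have $\SurvKM{t}{\Data}=P\,R(t)$ and $\SurvKM{t}{\Data^{-m}}=Q\,R(t)$ for $t \geq t_j$. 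Hence $\text{(B)} = \int_{t_j}^{\infty}\big[N P - (N-1)Q\big]R(t)\,dt$, and since $R(t) \geq 0$, the target inequality $\text{(B)} \geq \int_{t_j}^{\infty}R(t)\,dt$ collapses to the single scalar inequality $N P - (N-1)Q \geq 1$.

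Establishing $N P - (N-1)Q \geq 1$ is the step I expect to be the main obstacle, and I would handle it with the denominator-shifting rearrangement already validated in Equations~\ref{eq:km_term_any_censor}--\ref{eq:deduction_term_any_censor}. Using $N = n_0$ and pulling out the last factor, $N P = \Pi_a\,(n_{j-1}-d_{j-1})$ and $(N-1)Q = \Pi_b\,(n_{j-1}-1-d_{j-1})$, where $\Pi_a = \prod_{l=0}^{j-2}\frac{n_l-d_l}{n_{l+1}}$ and $\Pi_b = \prod_{l=0}^{j-2}\frac{n_l-1-d_l}{n_{l+1}-1}$. Setting $E = n_{j-1}-d_{j-1}$ (the at-risk count carried into $t_j$, hence nonnegative) then yields the clean identity
\begin{equation*}
    N P - (N-1)Q = \Pi_a E - \Pi_b(E-1) = \Pi_b + (\Pi_a - \Pi_b)\,E .
\end{equation*}
Equation~\ref{eq:inequality_2_any_censor} supplies exactly $\Pi_a \geq \Pi_b \geq 1$, so $\Pi_b \geq 1$ and $(\Pi_a-\Pi_b)E \geq 0$, giving $N P - (N-1)Q \geq 1$. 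Combining $\text{(A)} \geq t_j$ with $\text{(B)} \geq \int_{t_j}^{\infty}R(t)\,dt$ then produces $e_{\text{pseudo-obs}}(m) \geq t_j + \int_{t_j}^{\infty}R(t)\,dt = e_{\text{margin}}(m)$, and the equality regime (a single censored instance) is recovered when $\Pi_a = \Pi_b = 1$, matching the preceding theorem.
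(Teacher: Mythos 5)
Your decomposition is structurally the same as the paper's own proof: split the pseudo-observation integral at $t_j$, bound the prefix part by $t_j$ via the machinery of Theorem~\ref{theorem:authenticity}, note that $\SurvKM{t}{\Data}$ and $\SurvKM{t}{\Data^{-m}}$ share all factors at event times $\geq t_j$, and thereby reduce the tail comparison to the scalar inequality $NP-(N-1)Q\geq 1$ (the paper compresses this tail step into ``it is trivial to prove''). Your reduction and the algebraic identity $NP-(N-1)Q=\Pi_b+(\Pi_a-\Pi_b)E$ are correct, and isolating the scalar inequality is actually cleaner than the paper's write-up. But the step you yourself flag as the main obstacle is where the argument breaks: the claim $\Pi_a\geq\Pi_b$ is false whenever any censoring occurs between event times before $t_j$. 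For $a=n_{k-1}-d_{k-1}\geq b=n_k$ one has $\frac{a}{b}-\frac{a-1}{b-1}=\frac{b-a}{b(b-1)}\leq 0$, so each factor of $\Pi_a$ is at \emph{most} the corresponding factor of $\Pi_b$ (equality only if no censoring in $(t_{k-1},t_k)$); concretely, $n_{k-1}=10$, $d_{k-1}=1$, $n_k=7$ gives $\frac{9}{7}<\frac{8}{6}$. Hence $1\leq\Pi_a\leq\Pi_b$, the opposite of what you need, and your term $(\Pi_a-\Pi_b)E$ is nonpositive and scales with $E$. In fairness, you inherited this from the paper: the first inequality of Equation~\ref{eq:inequality_2_any_censor} is itself wrong as printed, and its use in Equation~\ref{eq:deduction_term_any_censor} fails numerically (with the numbers above, $\frac{9}{7}\cdot\frac{6}{10}-\frac{8}{6}\cdot\frac{5}{9}\approx 0.031 < \frac{4}{90}\approx 0.044$), so the paper's proofs of Theorem~\ref{theorem:authenticity} and of this lemma carry the same gap.

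The scalar inequality $NP-(N-1)Q\geq 1$ is nevertheless true, but for a compensating reason your factorization hides: the deficit $\Pi_b-\Pi_a$ is controlled by the censoring counts, while $E$ shrinks with the same counts. A correct route is induction over the prefix. Let $u_l=\prod_{i\leq l}\frac{n_i-d_i}{n_i}$, $v_l=\prod_{i\leq l}\frac{n_i-1-d_i}{n_i-1}$ and $D_l=Nu_l-(N-1)v_l$; censorings leave $D_l$ unchanged, and at an event time with at-risk count $n$ and $d$ deaths (note $d\leq n-1$ on the prefix, since subject $m$ is still at risk there, so $1-\frac{d}{n-1}\geq 0$),
\begin{equation*}
D_{l+1}\;=\;\Bigl(1-\tfrac{d}{n}\Bigr)Nu_l-\Bigl(1-\tfrac{d}{n-1}\Bigr)(N-1)v_l\;\geq\;\Bigl(1-\tfrac{d}{n}\Bigr)Nu_l-\Bigl(1-\tfrac{d}{n-1}\Bigr)\bigl(Nu_l-1\bigr)\;=\;1+\tfrac{d}{n-1}\Bigl(\tfrac{Nu_l}{n}-1\Bigr)\;\geq\;1\,,
\end{equation*}
where the last step uses $Nu_l\geq n$, which holds because $\frac{n_i-d_i}{n_i}\geq\frac{n_{i+1}}{n_i}$ telescopes to $u_l\geq n/n_0$ and $N\geq n_0$. (In the two-event-time case this reproduces the explicit slack $NP-(N-1)Q=1+\frac{c_0d_1}{(A-1-c_0)(A-c_0)}$ with $A=n_0-d_0$ and $c_0$ censorings in between, showing equality exactly in the single-censored-instance regime, as you anticipated.) With $D\geq 1$ pointwise, your parts (A) and (B) both go through and the lemma follows; so the fix is to replace the appeal to Equation~\ref{eq:inequality_2_any_censor} by this invariant -- a patch that the paper's own ``trivial'' tail step needs as well.
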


\begin{proof}
In the case of arbitrary numbers of censored and event instances, the pseudo-observation value has the following relationship with the KM estimators:
\begin{equation*}
\begin{aligned}
    e_{\text{pseudo-obs}} (m) \geq t_j + \int_{t_j}^{\infty} \SurvKM{t}{\Data} \, dt + (N - 1)\int_{t_j}^{\infty} \left(\SurvKM{t}{\Data} - \SurvKM{t}{\Data^{-m}} \right) \, dt \ ,
\end{aligned}
\end{equation*}
The inequality is due to the derivation in Equation~\ref{eq:km_term_any_censor} and~\ref{eq:deduction_term_any_censor}. Following the idea in Theorem~\ref{theorem:authenticity}, it is trivial to prove that the sum of the second and third terms in the above equation is greater than or equal to $\int_{t_j}^{\infty} \prod_{k: \: t_j \, \leq \, t_k < t} \frac{n_k - d_k}{n_k} \, dt$. Then we complete the proof.
\end{proof}

\section{Experimental Details}
\label{appendix:exp_details}

\subsection{Datasets and Preprocessing}
\label{appendix:data_details}
In this section, we will describe how we preprocess the raw survival datasets.
\subsubsection{GBM}
GBM is retrieved from The Cancer Genome Atlas (TCGA) dataset~\cite{weinstein2013cancer}.
We only select patients diagnosed with glioblastoma multiforme cancer to build the GBM dataset. 
The data from TCGA can be found on \url{http://firebrowse.org/} or by the instruction in~\citet{haider2020effective}.
There are three features (radiation therapy, Karnofsky performance score, and ethnicity) containing the missing values.
We will use their median value to fill in the missing values.

\subsubsection{SUPPORT}
The Study to Understand Prognoses Preferences Outcomes and Risks of Treatment (SUPPORT) dataset~\cite{knaus1995support} comprises 8873 participants with the aim of examining survival outcomes and clinical decision-making for seriously ill hospitalized patients. 
The dataset consists of a proportion of missing values for a large proportion of features. 
The official website (\url{https://biostat.app.vumc.org/wiki/Main/SupportDesc}) for the SUPPORT dataset provides a guideline for imputing baseline physiologic features, we followed that procedure. 
For the rest features with missing values, we will also use the median value imputation.

\subsubsection{METABRIC}
The Molecular Taxonomy of Breast Cancer International Consortium (METABRIC)~\cite{curtis2012genomic} contains survival information for breast cancer patients.
The feature sets contain genetic and protein expression features.
The dataset can be downloaded from (\url{https://github.com/havakv/pycox}), and it does not have any missing values.

\subsubsection{MIMIC-IV}
The Medical Information Mart for Intensive Care (MIMIC)-IV~\cite{johnson2022mimic} dataset is an update to MIMIC-III, which provides critical care data from patients admitted to hospital and intensive care units (ICU).
We create two datasets using the MIMIC-IV database. 

\emph{MIMIC-IV (all-cause mortality)} contains patients that are alive at least 24 hours after being admitted to ICU.
Their date of death is derived from hospital records or state records, which means, the cause of mortality is not limited to the reason for ICU admission. 
We follow the instruction from~\cite{Han2022SurvivalMD} to make the dataset. 
However, there are two differences between our dataset and theirs.
The first is that their paper used MIMIC-IV v1.4 while we used the latest version, MIMIC-IV v2.0.
The second is if a patient got admitted to ICU multiple times, we will only include the last admission while they will consider each visit as a separate data.
The SQL code and python code that prepossesses the data from MIMIC-IV database is available in our GitHub repository. 

\emph{MIMIC-IV (hospital cause mortality)} contains patients that are alive at least 24 hours after admitting to the hospital. 
Their date of death is only derived from hospital records, so the direct cause of death will be the same as the cause of hospital admission.
Because time-series lab features are only available for patients admitted to ICU, we can only use the demographic and clinical features to describe each patient. 
The SQL code and python code that prepossesses the data from MIMIC-IV database is available in our GitHub repository.

\subsection{Model Implementation Details and Hyperparameter Choices}
\label{appendix:model_details}

In this section, we will describe the implementation of the models utilized in the performance comparison. Table~\ref{tab:model_comp} also provides a summary of the model comparison. 
Because the purpose of this work is to compare evaluation metrics, extensive hyperparameter searches for each model are unnecessary. 
Rather, we would like that the models display distinguishable performance.
Please do not view these results as a definitive evaluation of the robust performance of survival models.

\begin{table}[!t]
\centering
\caption{Comparison between the baseline time-to-event models. }
\label{tab:model_comp}
\begin{tabular}{ccccc}
\toprule
             & Survival Curves & Individual Prediction & Time-Prediction & Continuous \\ \midrule
LR           & {\color{blue}\xmark}       & {\color{red}\cmark}       & {\color{red}\cmark}       & N/A                \\
KM           & {\color{red}\cmark}       & {\color{blue}\xmark}       & {\color{blue}\xmark}       & {\color{blue}\xmark}             \\
CoxPH        & {\color{red}\cmark}$^\dagger$    & {\color{red}\cmark}     & {\color{blue}\xmark}       & {\color{red}\cmark}$^\ddagger$            \\
AFT          & {\color{red}\cmark}       & {\color{red}\cmark}       & {\color{red}\cmark}       & {\color{red}\cmark}             \\
RSF          & {\color{red}\cmark}       & {\color{red}\cmark}       & {\color{blue}\xmark}       & {\color{blue}\xmark}             \\
GBCM         & {\color{red}\cmark}       & {\color{red}\cmark}       & {\color{blue}\xmark}       & {\color{blue}\xmark}             \\
MTLR         & {\color{red}\cmark}       & {\color{red}\cmark}       & {\color{blue}\xmark}       & {\color{blue}\xmark}             \\
DeepHit      & {\color{red}\cmark}       & {\color{red}\cmark}       & {\color{blue}\xmark}       & {\color{blue}\xmark}             \\
SCA          & {\color{red}\cmark}       & {\color{red}\cmark}       & {\color{blue}\xmark}       & {\color{red}\cmark}             \\
S-MDN        & {\color{red}\cmark}       & {\color{red}\cmark}       & {\color{red}\cmark}       & {\color{red}\cmark}             \\
\bottomrule
\end{tabular}
\\
$^\dagger$ Naive CoxPH only predicts risk scores, whereas its Breslow extension allows the model to generate survival curves. \\
$^\ddagger$ Although CoxPH model assumes the risk is a time-invariant score in continuous time, the baseline hazard function is estimated through the discrete-time Breslow estimator. 
\end{table}

\emph{Linear regression (LR)} is a regressor model. We implemented an LR model using an Adam optimizer. LR is not a survival prediction model as it cannot handle censored subjects nor generate ISD and risk predictions. Instead, we only used the uncensored subjects in the training set to train the model. The model generated the estimated event time for the full test set, and we can perform the evaluation on the full test set. The model is implemented using \texttt{scikit-learn} packages.

\emph{Kaplan Meier~\cite{kaplan1958nonparametric}} is a non-parametric estimator to predict the survival distribution for a group of subjects. It is not a personalized prediction tool. We use the median survival time of the training set's population-level survival distribution as the predicted time for all the testing subjects and perform the evaluation. The model is implemented using \texttt{lifelines} packages.

\emph{CoxPH~\cite{cox1972regression} with Breslow estimator~\cite{breslow1975analysis}} is a semi-parametric model. It consists of a population-level baseline hazard function (non-parametric) and a partial hazard function (parametric). In the model implementation, the population-level baseline hazard function is estimated using Breslow method~\cite{breslow1975analysis}. And the partial hazard function is estimated by a linear function. The model is optimized using partial likelihood loss~\cite{cox1975partial} and Adam optimizer. The model is implemented in the code base attached. The early stop technique is applied to the model via validating on a separate validation set 

\emph{AFT~\cite{stute1993consistent}} with Weibull distribution is a parametric model with two estimated coefficients (a scale parameter and a shape parameter). We add a small l2 penalty to the loss for the model optimization. The method is implemented using \texttt{lifelines} packages.

\emph{GBM-C~\cite{hothorn2006survival}} is an ensemble method with component-wise least squares as the base learner. We use the 100 boosting stages with partial likelihood loss for optimization. The method is implemented using \texttt{scikit-survival} packages.

\emph{RSF~\cite{ishwaran2008random}} is also an ensemble estimator that fits a number of survival trees on bootstrapping datasets. We use 50 trees with 3 minimal samples per leaf to fit the model. The method is implemented using \texttt{scikit-survival} packages.

\emph{MTLR~\cite{yu2011learning}} is a discrete model that directly models the survival distribution for each individual. The number of discrete times is determined by the square root of numbers of uncensored patients, and use quantiles to divide those uncensored instances evenly into each time interval, as suggested in~\cite{jin2015using, haider2020effective}. The early stop technique is applied to the model via validating on a separate validation set 
The model is implemented in the code base attached.

\emph{DeepHit~\cite{lee2018deephit}} is also a discrete model. It models the probability density function of the event for each individual (and PDF can be used to calculate the survival distribution accordingly). The number of discrete times is determined by the square root of number of uncensored patients, just like MTLR. However, the time interval is uniformly split from time zero to the last observed time, as in the original paper~\cite{lee2018deephit}. Early stopping is also performed during the optimization. The model is implemented using \texttt{pycox} packages. 

\emph{SCA~\cite{chapfuwa2020survival}} models the covariates into a mixture-of-distributions latent space. And each component in the latent space will be used to stochastically predict/sample the survival distribution. We will use a three-hidden-layer structure with dimensions of [50, 50, 50]. 
We set the number of components to 25, 
kept the probability for weights equal to 0.8, and 
set the sample size to 200. 
Early stopping is also performed with at least 10000 epochs for guaranteed improvement. The model is implemented using the code base in \url{https://github.com/paidamoyo/survival_cluster_analysis}.

\emph{S-MDN~\cite{Han2022SurvivalMD}} uses Mixture Density Networks to model the survival distributions. The model architecture in the experiment has one hidden layer with a size of 15. The number of components is set to 15, and use residual as the initial type. The model is optimized via RMSprop optimizer with early stopping. The model is implemented using the code base in \url{https://github.com/XintianHan/Survival-MDN}.

 For further details, we refer to the code base attached.

\section{Complete Results}
\label{appendix:results_full}

\begin{figure}[!t]
    \centering
    \includegraphics[width=\columnwidth]{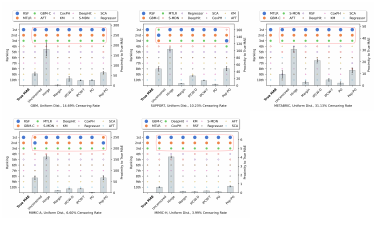}
    \caption{Evaluation metrics comparison on uniform censoring in terms of ranking accuracy (left axis) and proximity to true MAE (right axis). }
    \label{fig:uniform_full}
\end{figure}
\subsection{Uniform Censorship}
The five subplots in Figure~\ref{fig:uniform_full} demonstrate the metrics performance for uniform censoring distributions. 
The last column in each subplot shows the ablation study of MAE-population-pseudo-observation (MAE-Pop-PO), in addition to the true MAE and six MAE-inspired metrics presented in Section~\ref{sec:mae_censor}.
All the MAE-inspired metrics discussed in Section~\ref{sec:mae_censor} can accurately identify the top-three models in all five datasets.
While MAE-margin is the closest one to the true MAE score in the GBM dataset, MAE-PO has the smallest difference in SUPPORT, METABRIC, MIMIC-A, and MIMIC-H in terms of true MAE proximity.
The results of MAE-Pop-PO show it neither has advantages in ranking the models (incorrectly ranking the top-three models for SUPPORT) nor can approximate the true MAE value (second largest in 4 subplots).  
We can conclude that for the uniform distribution, MAE-PO is the best here with MAE-margin as the second best one. 

\begin{figure}[!t]
    \centering
    \includegraphics[width=\columnwidth]{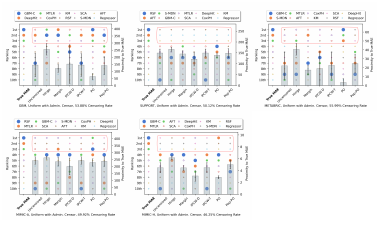}
    \caption{Evaluation metrics comparison on uniform with administrative censoring in terms of ranking accuracy (left axis) and proximity to true MAE (right axis). }
    \label{fig:uniform_admin_full}
\end{figure}
\subsection{Uniform with Administrative Censorship}
The five subplots in Figure~\ref{fig:uniform_admin_full} demonstrate the metrics performance for uniform censoring distributions with administrative censoring.
Due to the large percentage of administrative censoring, all semi-synthetic datasets have very large percentage censoring rates. 
The MAE-PO is again the best here, in both ranking performance (as it is the only one that correctly identifies the top-three models in GBM and METABRIC, the only one that identifies two of the top-three models for MIMIC-A, and the only one that identifies one of the top-three models for MIMIC-H) and closeness to the true MAE (significantly better in GBM, METABRIC and MIMIC-H with $p$-value $<$ 0.05, and one of the best in SUPPORT and MIMIC-A).

MAE-margin is the runner-up as it can identify parts of the best-performing models and has the second closest difference to true MAE. 
Between IPCW-D and IPCW-T, the performance does not have a significant difference in both ranking and proximity to true MAE.  
However, IPCW-D is associated with quite large error bars, which may be because the accuracy of later uncensored subjects will dominate the score (as we discussed in Section~\ref{sec:ipcw-d}).

Note 
that the GBM-C model, 
while doing very well (best in GBM and METABRIC) for the true MAE, does not achieve a good ranking for uncensored subjects (7th in GBM, 9th in METABRIC, and 10th in MIMIC-H for MAE-uncensored score).
Most other MAE variants also consider GBM-C as an ``inefficient'' model, while only the pseudo-observation can disclose GBM-C's true performance (1st in GBM and METABRIC, and 4th in MIMIC-H).

As to the results of MAE-Pop-PO, it has relatively large errors to the true MAE, and it shows no benefit for identifying the top models.

\begin{figure}[!t]
    \centering
    \includegraphics[width=\columnwidth]{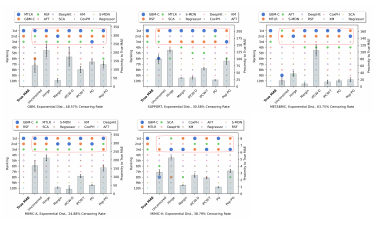}
    \caption{Evaluation metrics comparison on exponential censoring in terms of ranking accuracy (left axis) and proximity to true MAE (right axis). }
    \label{fig:exp_full}
\end{figure}
\subsection{Exponential Censorship}
The five subplots in Figure~\ref{fig:exp_full} demonstrate the metrics performance for exponential censoring distributions.
MAE-margin and MAE-PO show comparable performance in these five semi-synthetic datasets. 
MAE-margin correctly identifies the top models in GBM and MIMIC-A datasets and has a significantly smaller difference to true MAE compared to MAE-PO ($p$-value $<0.05$) on GBM, METABRIC, and MIMIC-A datasets.
MAE-PO, on the other side, excels in identifying all the top-three models in all five semi-synthetic datasets, and has a significantly smaller difference to true MAE ($p$-value $<0.05$) on  SUPPORT and MIMIC-H datasets.

We notice that MAE-IPCW-D is always associated with a larger standard deviation when it comes to the proximity to true MAE, this is again due to the reason we discussed in Section~\ref{sec:ipcw-d}.
MAE-Pop-PO does not provide any advantages when ranking models, nor can it approximate the true MAE value.

\begin{figure}[ht]
    \centering
    \includegraphics[width=\columnwidth]{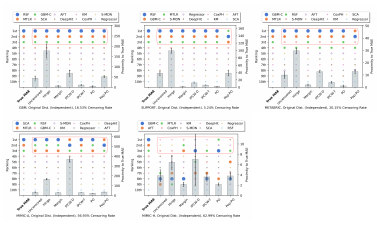}
    \caption{Evaluation metrics comparison on feature-independent original censoring in terms of ranking accuracy (left axis) and proximity to true MAE (right axis). }
    \label{fig:original_ind_full}
\end{figure}
\subsection{Feature-Independent Original Censorship}
The five subplots in Figure~\ref{fig:original_ind_full} demonstrate the metrics performance for feature-independent original censoring distribution.
Among all the evaluation metrics, margin and pseudo-observation perform equally the best for identifying the top three performing models (correctly identifying the best models for GBM, SUPPORT, METABRIC, and MIMIC-A, while recognizing two of the top-three model for MIMIC-H). 
MAE-PO has a slight advantage in the proximity to true MAE. Its value is closer to the true MAE on GBM and significantly closer on METABRIC and MIMIC-A datasets. 
In addition, we also observe that IPCW-D is always associated with a large variance (reason explained in Section~\ref{sec:ipcw-d}). 
As to the results of MAE-Pop-PO, it again shows no promising results compared to MAE-margin or MAE-PO.

\begin{figure}[ht]
    \centering
    \includegraphics[width=\columnwidth]{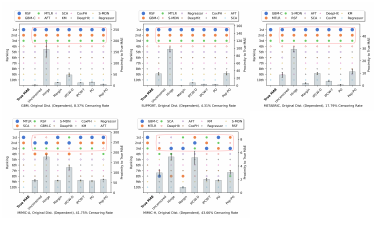}
    \caption{Evaluation metrics comparison on feature-dependent original censoring in terms of ranking accuracy (left axis) and proximity to true MAE (right axis). }
    \label{fig:original_dep_full}
\end{figure}
\subsection{Feature-Dependent Original Censorship}
The five subplots in Figure~\ref{fig:original_dep_full} demonstrate the metrics performance for feature-independent original censoring distribution.
MAE-uncensored 
performs the best on GBM datasets,
which may 
be due to the low synthetic censoring rate of this dataset (8.37\% censoring rate), 
meaning 
the whole dataset could be approximately represented by the uncensored population.
Among the MAE metrics that can handle the censored subjects, MAE-margin, IPCW-T, and MAE-PO perform equally well on GBM and MIMIC-A datasets.
For the METABRIC dataset, pseudo-observation is the best 
metric as it has the significantly lowest error to true MAE among all the metrics that can identify the top-three performing models.  
Pseudo-observation is also the optimal metric for the MIMIC-H dataset, as it is the only metric that can identify the top three models (GBM-C, MTLR, and SCA). 

\begin{figure}[ht]
    \centering
    \vspace*{-0.3in}
    \includegraphics[width=0.67\columnwidth]{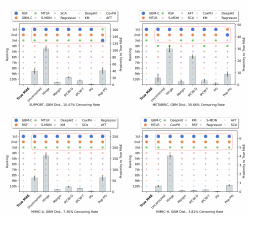}
    \caption{Evaluation metrics comparison on censoring distribution from GBM dataset in terms of ranking accuracy (left axis) and proximity to true MAE (right axis). }
    \label{fig:gbm_full}
\end{figure}
\subsection{External GBM Dataset Censorship}
The four subplots in Figure~\ref{fig:gbm_full} demonstrate the metrics performance for external dataset censor distribution using the GBM dataset.
We only have four subplots because the GBM dataset with external dataset censoring will just be the same as feature-independent original censorship.
MAE-PO is the optimal metric in all four semi-synthetic datasets, as it correctly identifies the top-three models in all cases, and has the significantly lowest error to true MAE among all other MAE-inspired metrics.

\end{document}